\def\eqref#1{equation~\ref{#1}}
\def\1{\bm{1}}
\DeclareMathAlphabet{\mathsfit}{\encodingdefault}{\sfdefault}{m}{sl}
\SetMathAlphabet{\mathsfit}{bold}{\encodingdefault}{\sfdefault}{bx}{n}
\def\gN{{\mathcal{N}}}
\newcommand{\E}{\mathbb{E}}
\newcommand{\R}{\mathbb{R}}
\DeclareMathOperator*{\argmax}{arg\,max}
\DeclareMathOperator*{\argmin}{arg\,min}
\newcommand{\method}{\textsc{RFC}\xspace}
\newtheorem{theorem}{Theorem}[section]
\newcommand{\red}[1]{\textcolor{brown}{#1}}
\title{Towards Fair Classification against Poisoning Attacks}
\author{Han Xu, Xiaorui Liu, Yuxuan Wan, Jiliang Tang \\
Department of Computer Science and Engineering\\
Michigan State University, MI, Michigan 48824, USA \\
\texttt{\{xuhan1, wangyuxua, xiaorui\}@msu.edu} 
}
\begin{document}

\maketitle

\begin{abstract}
Fair classification aims to stress the classification models to achieve the equality (treatment or prediction quality) among different sensitive groups. However, fair classification can be under the risk of poisoning attacks that deliberately insert malicious training samples to manipulate the trained classifiers' performance. In this work, we study the poisoning scenario where the attacker can insert a small fraction of samples into training data, with arbitrary sensitive attributes as well as other predictive features. We demonstrate that the fairly trained classifiers can be greatly vulnerable to such poisoning attacks, with much worse accuracy \& fairness trade-off, even when we apply some of the most effective defenses (originally proposed to defend traditional classification tasks). As countermeasures to defend fair classification tasks, we propose a general and theoretically guaranteed framework which accommodates traditional defense methods to fair classification against poisoning attacks. Through extensive experiments, the results validate that the proposed defense framework obtains better robustness in terms of accuracy and fairness than representative baseline methods.
\end{abstract}

\section{Introduction}
Data poisoning attacks~\citep{biggio2012poisoning, chen2017targeted, steinhardt2017certified} have brought huge safety concerns for machine learning systems that are trained on data collected from public resources~\citep{konevcny2016federated, weller1988systematic}.
For example, the studies~\citep{biggio2012poisoning, mei2015using, burkard2017analysis, steinhardt2017certified} have shown that an attacker can inject only a small fraction of fake data into the training pool of a classification model and intensely degrade its accuracy. As countermeasures against data poisoning attacks, there are defense methods~\citep{steinhardt2017certified, diakonikolas2019sever} which can successfully identify the poisoning samples and sanitize the training dataset. %Among these defenses, some methods~\citepp{steinhardt2017certified, diakonikolas2019sever, prasad2018robust} can also provide provable guarantees of the models' accuracy under poisoning attacks. 

Recently, in addition to model safety, people have also paid significant attention to fairness. They stress that machine learning models should provide the ``equalized'' treatment or ``equalized'' prediction quality among groups of population~\citep{hardt2016equality, agarwal2018reductions, donini2018empirical, zafar2017fairness}. Since the fair classification problems are human-society related, it is highly possible that training data is provided by humans, which can cause high accessibility for adversarial attackers to inject malicious data. Therefore, fair classification algorithms are also prone to be threatened by poisoning attacks. Since fair classification problems have distinct optimization objectives \& optimization processes from traditional classification, a natural question is: 
\textit{Can we protect fair classification from data poisoning attacks? In other words, are existing defenses sufficient to defend fair classification models?}

To answer these questions, we first conduct a preliminary study on Adult Census Dataset to explore whether existing defenses can protect fair classification algorithms (see Section 3.2). In this work, we focus on representative defense methods including k-NN Defense~\citep{koh2021stronger} and SEVER~\citep{diakonikolas2019sever}.
To fully exploit their vulnerability to poisoning attacks, we introduce a new attacking algorithm \textit{F-Attack}, where the attacker aims to cause the failure of fair classification. In detail, by injecting the poisoning samples, the attacker aims to mislead the trained classifier such that it cannot achieve good accuracy, or not satisfy the fairness constraints. From the preliminary results, we find that both k-NN defense and SEVER will have an obvious accuracy or fairness degradation after attacking. Moreover, we also compare F-Attack with one of the strongest poisoning attacks Min-Max Attack~\cite{steinhardt2017certified} (which is devised for traditional classification). The result demonstrates that our proposed F-Attack has a better attacking effect compared to Min-Max Attack. In conclusion, our preliminary study highlights the vulnerability of fair classification against poisoning attacks, especially against F-Attack.

In this paper, we further propose a defense framework, Robust Fair Classification (\method), to improve the robustness of fair classification against poisoning attacks. Different from existing defenses, our method aims to scout abnormal samples from each individual sensitive subgroup in each class. To achieve this goal, \method first applies the similar strategy as the works~\citep{diakonikolas2017being, diakonikolas2019sever}, to find abnormal data samples which significantly deviate from the distribution of other (clean) samples. Based on a theoretical analysis, we verify that \method can exclude more poisoning samples than clean samples in each step. Moreover, to further avoid removing too many clean samples, we introduce an \textit{Online-Data-Sanitization} process: in each iteration, we remove a possible poisoning set from a single subgroup of a single class and test the retrained models' performance on a clean validation set. This helps us locate the subgroup which contains the most poisoning samples. 
Through extensive experiments on two benchmark datasets, Adult Census Dataset and COMPAS, we validate the effectiveness of our defense.
Our key contributions are summarized as:
\begin{itemize}[leftmargin=0.3in]
\setlength\itemsep{0em}
\item We devise a strong attack method to poison fair classification and demonstrate the vulnerability of fair classification under the protection of traditional defenses to poisoning attacks. 
\item We propose an efficient, and principled framework, Robust Fair Classification (\method). Extensive experiments and theoretical analysis demonstrate the effectiveness and reliability of the proposed framework. 
\end{itemize}

\section{Problem Statement and Notations}

In this section, we formally define the setting of our studied problem and necessary notations. 

\noindent\textbf{Fair Classification.} In this paper, we focus on the classification problems which incorporate group-level fairness criteria. First, let $x \subseteq \R^d$ be a random vector denoting the (non-sensitive) features, with a label $y\in \mathcal{Y} = \{Y_1,...,Y_m\}$ with $m$ classes, a sensitive attribute $z\in \mathcal{Z} = \{Z_1,Z_2, ..., Z_k\}$ with $k$ groups. 
Let $f(X, w)$ 
represent a classifier with parameters $w \in \mathcal{W}$. Then, a fair classification problem can be defined as:
\begin{align}\label{eq:fair}
\min_{w} \E \Big[ l(f(x, w), y) \Big] ~~\text{s.t.}~~ g_j(w)\leq \tau, \forall j \in \mathcal{Z} 
\end{align}
where the function $\E[l(\cdot)]$ is the expected loss on test distribution, and $\tau$ is the unfairness tolerance. The constraint function $g_j(w) = \E \Big[h(w, x, y) | z = j\Big]$ represents the desired fairness metric for each group $j \in \mathcal{Z}$. For example, in binary classification problems, we use $f(x, w)> 0$ to indicate a positive classification outcome.
Then, $h(w, x, y) = \1 (f(x,w) >0) - \E\Big[\1 (f(x, w)>0)\Big]$ refers to equalized positive rates in the \textit{equalized treatment} criterion~\citep{mehrabi2021survey}. Similarly, $h(w, x, y) = \1 (f(x, w) >0 | y = \pm 1) - \E\Big[ \1
 (f(x, w)>0 | y = \pm 1)\Big]$ is for equalizing true / false positive rates in the \textit{equalized odds}~\citep{hardt2016equality}. Given any training dataset $D$, we define the empirical loss function as $L(D, w)$ as the average loss value of the model, and $g_j(D,w)$ is the empirical fairness constraint function.
 
In our paper, we assume that the clean training samples are sampled from the true distribution $\mathcal{D}$ following the density $\mathbb{P}(x,y,z)$. We also use $\mathcal{D}^{u, v}$ to denote the distribution of (clean) samples given by $y = Y_u$ and $z = Z_u$, which has a density $\mathbb{P}(x,y,z|y = Y_u, z = Z_v)$. 

\noindent\textbf{Poisoning Attack ($\epsilon$-poisoning model).} In our paper, we consider the poisoning attack following the scenario. Given a
fair classification task, the poisoned dataset is generated as follows: first, $n$ clean samples $D_C = \{(x_i, y_i, z_i)\}_{i=1}^{n}$ are drawn from $\mathcal{D}$ to form the clean training set. Then, an \textit{adversary} is allowed to insert an $\epsilon$ fraction of $D_C$ with arbitrary choices of $D_P = \{(x_i, y_i, z_i)\}_{i = 1}^{\epsilon n}$. We can define such a poisoned training set $D_C\cup D_P$ as \textbf{$\epsilon$-\textit{poisoning model}}.

%\noindent \textbf{Remarks.} In this paper, we assume that the attacker has full knowledge of the defender's algorithm and the clean training data $D_C$, and the attacker has the ability to insert malicious features, labels and sensitive attributes simultaneously. Although it seems generous to the attacker, we consider this as a strong setting with the maximal level of the adversary, which is a common practice in general studies about adversarial robustness~\citep{steinhardt2017certified, koh2021stronger, mehra2021robust, cohen2019certified}.   %\jt{can we give citations where existing works also apply this maximal level of adversary to evaluate robustness of not fair classification algorithms??}

\section{Fair Classification is Vulnerable to Poisoning Attacks}\label{sec:attack}

In this section, we first introduce the algorithm of our proposed attack F-Attack under the $\epsilon$-poisoning model. Then, we conduct empirical studies to evaluate the robustness of fair classification algorithms (and popular defenses) against F-Attack and baseline attacks.  

\subsection{F-Attack: Poisoning Attacks for Fair Classification}\label{sec:attack_method}

Given a specific fair classification task, we consider that the attacker aims to contaminate the training set, such that applying existing algorithms cannot successfully fulfill the fair classification goal. Note that for fair classification tasks, both accuracy and fairness are the desired properties and they always have strong tension in practice~\cite{menon2018cost}. Therefore, in our attack, we consider misleading the training algorithms such that at least one of the two criteria is unsatisfied.
Formally, we define the attacker's objective as Eq.(\ref{eq:poison_obj}), where the attacker inserts a poisoning set $D_P$ with size $\epsilon n$ in the feasible injection space $\mathcal{F}$ to achieve:
\begin{align}\label{eq:poison_obj}
\begin{split}
\max_{D_P \subseteq \mathcal{F}} ~~\E \Big[ l(f(x, w^*), y) \Big] ~~\text{s.t.}~  ~w^* = \argmin_{w \in \mathcal{H}_{fair}} L(D_C \cup D_P, w).
\end{split}
\end{align}
It means for the classifier $w^*$ that trained on $D_C\cup D_p$ and has a low empirical loss $L(D_C\cup D_p, w^*)$, if it falls in the space $\mathcal{H}_{fair}$, it will have a large expected loss (on the test distribution $\mathcal{D})$.
Here, $\mathcal{H}_{fair}$ is the space of models (with a limited norm) that satisfy the fairness criteria on clean distribution $\mathcal{D}$. To have a closer look at Eq.(\ref{eq:poison_obj}), we discuss case by case. Suppose we obtain $w^*$ by fair classification on the set $D_C\cup D_P$, there are cases:
\begin{enumerate}[noitemsep,topsep=0pt,parsep=0pt,partopsep=0pt]
\item $w^* \notin \mathcal{H}_{fair}$: The fairness criteria (on the test set) is not satisfied. 
\item $w^* \in \mathcal{H}_{fair}$: Since $w^*$ is trained on \small{$D_C \cup D_P$} \normalsize to have low \small{$L(D_c\cup D_p,w^*)$}, \normalsize $w^*$ will have high test error. 
\end{enumerate}
For each case, the model $w^*$ will either have an unsatisfactory accuracy or unsatisfactory fairness. Next, we simplify the objective and constraints in Eq.(\ref{eq:poison_obj}) to transform it into a solvable problem. We first conduct relaxations of the objective in Eq.(\ref{eq:poison_obj}) (similar to the works~\citep{steinhardt2017certified, koh2021stronger}):
\begin{align*}
\begin{split}
\E \Big[ l(f(X, w), Y) \Big] \stackrel{(\text{i})}{\approx}  L(D_C, w) 
\stackrel{(\text{ii})}{\leq}  L(D_C, w) + \epsilon L(D_P, w) = (1 + \epsilon) L(D_C\cup D_P, w)
\end{split}
\end{align*}
Specifically, approximation (i) holds if clean training data $D_C$ has sufficient samples and is close to the test distribution $\mathcal{D}$, and model $w$ is appropriately regularized. The upper bound (ii) holds because of the non-negativity of loss values. The upper bound (ii) can be tight if the fraction of poisoning samples $\epsilon$ is small. 
Thus, we transfer Eq.(\ref{eq:poison_obj}) to a bi-level optimization problem between $w$ and $D_P$. If the model $f(\cdot)$ and loss $l(\cdot)$ are convex, we can further swap them to get a min-max form as:
\begin{align}\label{eq:bilevel}
\begin{split}
\max_{D_P \subseteq \mathcal{F}} \min_{w \in \mathcal{H}_{fair}} ~~L(D_C\cup D_P, w) \rightarrow \min_{w \in \mathcal{H}_{fair}} \max_{D_P \subseteq \mathcal{F}}
~~L(D_C\cup D_P, w)
\end{split}
\end{align}
Our proposed F-Attack is to solve Eq.(\ref{eq:bilevel}) which is shown in Algorithm~\ref{alg:attack}. It solves a saddle point problem to alternatively find the worst attack points $(x, y, z)$ w.r.t the current model and then update the model in
the direction of the attack point. In detail, in Step (1), given the current model $w$, we solve the inner maximization problem to maximize $L(D_C\cup D_P, w)$. It is equal to finding sample $(x,y,z)$ with the maximal loss: 
\begin{align}\label{eq:max_loss}
\max_{D_P \subseteq \mathcal{F}} L(D_C\cup D_P, w) = L(D_C,w) + \epsilon \cdot \max_{(x,y,z)\in\mathcal{F}} l(f(x,w), y).    
\end{align}
In Step (2), we update $w$ to minimize $L(D_C\cup D_P, w)$. Note that in Step (2), we should also constrain the model $w$ to fall into the fair model space $\mathcal{H}_{fair}$.
Thus, when we update $w$ in Step (2), we also penalize the fairness violation of $w$. Here, we calculate the fairness violation as $(g_j(D_C, w) - \tau)^+$ (with weight parameter $\lambda > 0$) on the clean set $D_C$ to approximate the fairness violation on real data $\mathcal{D}$.
\vspace{-0.3cm}
\RestyleAlgo{ruled}
\begin{algorithm}
\caption{Algorithm of F-Attack}\label{alg:attack}
\SetKwInOut{Input}{Input}
\SetKwInOut{Output}{Output}
\Input{Clean data $D_C$, number of poisoned samples $\epsilon n$, feasible set $\mathcal{F}$, Fairness constraint functions $g_j(\cdot)$ and tolerance $\tau_j$ ($j = 1, ..., |\mathcal{Z}|$), $\lambda > 0$, $\eta>0$, warm-up steps $n_{burn}$.}
\Output{A poisoning set $D_P$}
 \For{$t = 1, ..., n_{burn} + \epsilon n$}
 {
1. Solver the inner maximization: $(x,y,z) = \argmax_{(x,y,z)\in\mathcal{F}} l(f(x,w),y)$\\
2. Solver the outer minimization: \small $w = w - \eta\cdot \nabla_w((L(D_C\cup D_P, w) + \lambda \sum_j(g_j(D_C, w) -\tau)^+))$\\
\normalsize
\If{$t > n_{burn}$}{ $D_P = D_P \cup \{(x,y,z)\}$}}
\end{algorithm}
\vspace{-0.3cm}

\normalsize
\newpage
\subsection{Preliminary Study on Adult Census Dataset}\label{sec:attack_syn}

\begin{wrapfigure}{r}{0.4\textwidth}
\vspace{-0.8cm}
\includegraphics[width=0.4\textwidth]{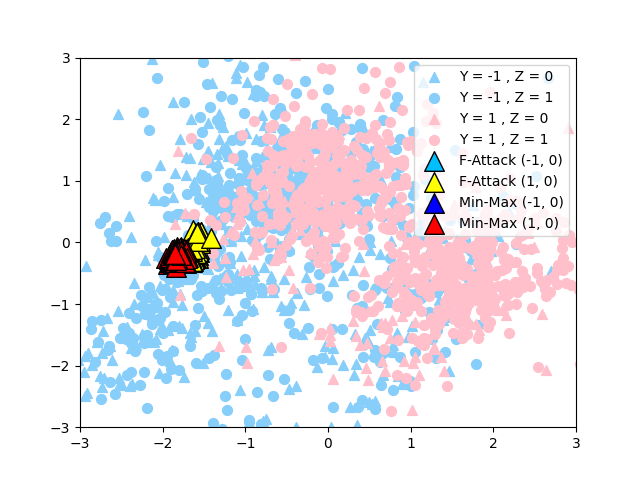}
\vspace{-0.8cm}
\caption{\small PCA visualization of Clean Samples and Poisoning Samples}
\label{fig:vis}
\vspace{-0.4cm}
\end{wrapfigure}
\textbf{Adult Census Dataset.} In this subsection, we conduct an experiment on Adult Census Dataset~\cite{kohavi1996scaling}, to test whether F-Attack can poison fair classification methods and whether existing defense methods can resist F-Attack. Here, we focus on the fairness criteria: Equalized True Positive Rate (TPR)~\cite{hardt2016equality} between the genders, and we apply the constrained optimization method~\cite{donini2018empirical} to train linear classifiers to fulfill the fair classification objective. It is worth mentioning that, this dataset contains many categorical features, such as marital-status, occupation, etc. For simplicity, we pre-process the dataset by transforming categorical features into a continuous space that is spanned by the first 15 principle directions of training (categorical) data. More details of the pre-processing procedure can be found in Appendix~\ref{app:setup}.

\textbf{Defense Methods.} Besides na\"ive fair classification, we mainly consider two representative data-sanitization defenses, which are existing popular methods to defend against poisoning attacks:  
\begin{itemize}
    \item \textit{k-NN Defense~\citep{koh2021stronger}.} This method removes the samples that are far from their k nearest neighbors. In detail, the k-NN defense calculates the ``abnormal'' score as $q_i = ||x_i - x_i^{(k, y)}||_2$, where $x_i^{(k, y)}$ is the k-th nearest neighbor to sample $x_i$ in class $y$. In this paper, we set $k=5$.
    \item \textit{SEVER.~\cite{diakonikolas2019sever}} This method aims to find abnormal samples by tracing abnormal gradients. In each iteration, we first train a fair classifier $f$ (with fixed $\tau$) and calculate the gradient of loss w.r.t the weight $w$ for each training sample $(x_i, y_i)$, and get the normalized gradient matrix $Q = \left[\nabla_w l(f(x_i, w), y_i) - \frac{1}{n}\sum_{j=1}^n\nabla_w l(f(x_j,w), y_j)\right]_{i = 1,..,n}$. SEVER flags the samples with large ``abnormal'' score $q_i = (Q_i \cdot v)^2$ as abnormal samples, where $v$ is the top right singular vector of $Q$. Intuitively, the ``abnormal'' samples make a great contribution to the variation of the gradient matrix $Q$, which suggests their gradients can significantly deviate from the gradients of other samples.
\end{itemize}

\textbf{Results.}  In our experiments, we insert 10\% poisoning samples to the training set, and define the feasible injection set $\mathcal{F}$ to be $\{(x,y,z):||x - \mu_y||\leq d\}$, where $d$ is a fixed radius. This will constrain the inserted samples not too far from the center of their labeled class, to evade potential defense. Since $\mathcal{F}$ is not related to sensitive attribute $z$, during F-Attack, we generate poisoning samples with a fixed $z$ to be 0 (female) or 1 (male).
During fairness training, we train multiple models with various hyperparameters to control the unfairness tolerance on the training set (following~\citep{lamy2019noise}). Then, we report the test performance when it has the best validation performance (which considers both accuracy and fairness, see Section~\ref{sec:rfc}, Eq.(\ref{eq:vs}) for more details). In Table\ref{tab:preliminary}, we report the performance\footnotemark for the defense methods. From the result, we can see: all training methods have a significant performance degradation under F-Attack. For example, under F-Attack $(z = 0)$, the SEVER defense has $\approx 4\%$ accuracy drop and $2\%$ fairness drop. This suggests that defenses such as SEVER and k-NN can be greatly vulnerable to poisoning attacks in fair classification. Moreover, we compare F-Attack with a baseline attack method Min-Max~\citep{steinhardt2017certified, koh2021stronger}, which is one of the strongest attacks for traditional classification. It also solves Eq.(\ref{eq:bilevel}) but does not constrain $w\in\mathcal{H}_{fair}$. From Table~\ref{tab:preliminary}, we can see that Min-Max has worse attacking performance than F-Attack, by causing slighter performance degradation. This result highlights the threat of F-Attack to fair classification.
\footnotetext{\footnotesize
We report the ``goodness of fairness'' as $1-\text{Unfair}$, i.e., $1 - |\text{TPR}(z = 0) - \text{TPR}(z = 1)|$ in Table~\ref{tab:preliminary}.}
\begin{table*}[h!]
\centering
\caption{F-Attack vs. Min-Max on Adult Census Dataset}
\resizebox{0.85\textwidth}{!}
{
\begin{tabular}{c| cc|cc |cc| cc | cc} 
\hline\hline
 &   \multicolumn{2}{c|}{\textbf{No Attack}} 
 & \multicolumn{2}{c|}{\textbf{Min-Max (z = 0)}}
  & \multicolumn{2}{c|}{\textbf{Min-Max (z = 1)}}
 & \multicolumn{2}{c|}{\textbf{F-Attack (z = 0)}}
  & \multicolumn{2}{c}{\textbf{F-Attack (z = 1)}}
  \\
\hline
&Acc. & Fair.&Acc. & Fair.&Acc. & Fair.&Acc. & Fair. & Acc. & Fair.\\
\hline\hline
\textbf{No Defense.}  &0.811	&0.962	&0.801	&0.958	&0.799	&0.851	&0.768	&0.956	&0.793	&0.803\\
\textbf{k-NN.} & 0.794	&0.952	&0.681	&0.936	&0.680	&0.904	&0.655	&0.951	&0.695	&0.895 \\
\textbf{SEVER.} & 0.812	&0.969	&0.798	&0.958	&0.797	&0.967	&0.773	&0.942	&0.772	&0.943\\
\hline\hline
\end{tabular}}
\label{tab:preliminary}
\end{table*}

\textbf{Discussion.} To have a deeper understanding on the behavior of F-Attack, in Figure~\ref{fig:vis}, we visualize the clean samples and poisoning samples (via F-Attack and Min-Max Attack) in a 2-dim projected space (via PCA). From the figure, we can see that: compared to Min-Max Attack (red), the samples obtained by F-Attack (yellow) have a smaller distance to the clean samples in their labeled class $y = 1$, although they are constrained in the same feasible injection set $\mathcal{F}$. It is because F-Attack aims to find samples with maximal loss (Eq.(\ref{eq:max_loss})) for fair classifiers, so the generated samples do not have the maximal loss for traditional classifiers. Thus, the poisoning samples from F-Attacks are closer to their labeled class. This fact helps explain why F-Attack is more insidious than Min-Max Attack under the detection of traditional defenses, such as SEVER.

\section{Robust Fair Classification (RFC)}\label{sec:rfc}

Motivated by studies in Section~\ref{sec:attack}, new defenses are desired to protect fair classification against poisoning attacks, especially against F-Attack. In this section, we first introduce a novel defense framework called Robust Fair Classification (RFC), and we provide a theoretical study to further understand the mechanism of RFC. In Section~\ref{sec:exp}, we conduct empirical studies to validate the robustness of RFC in practice. 
\subsection{Robust Fair Classification (RFC)}

\begin{wrapfigure}{r}{0.35\textwidth}
\vspace{-0.6cm}
\includegraphics[width=0.35\textwidth]{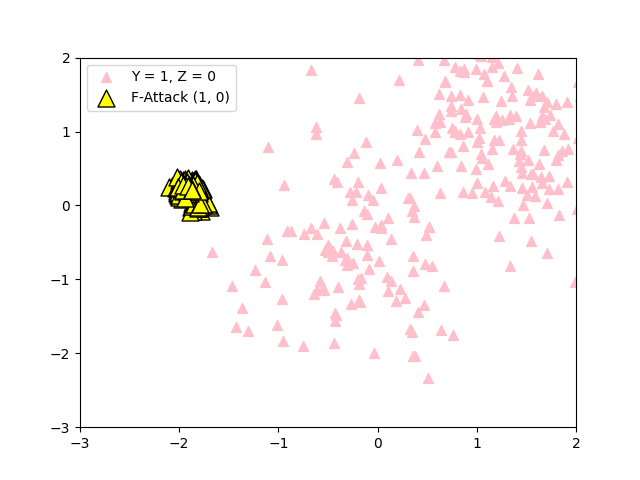}
\vspace{-0.8cm}
\caption{PCA visualization in (1, 0)}
\label{fig:subgroup}
\vspace{-0.4cm}
\end{wrapfigure}
Based on the discussion in Section~\ref{sec:attack}, F-Attack can evade traditional defenses such as SEVER and k-NN Defense, because the generated poisoning samples are close to the clean data of their labeled class. However, we assume that they may deviate from the distribution of clean samples in their labeled subgroup (the data distribution $\mathcal{D}^{y,z}$ given $y$ and $z$). Refer to the Figure~\ref{fig:subgroup}, which shows the location of poisoning samples generated via F-Attack ($z = 0$) and clean samples in subgroup $(y=1, z=0)$ in the 2D projected space. It suggests that the poisoning samples greatly contaminate the information of distribution $\mathbb{P}(x,y,z|y =1, z=0)$ in the training data. Thus, the injected poisoniend samples will not only confuse the original prediction task from $x$ to $y$, but also greatly disturb the fairness constraints (Eq.(\ref{eq:fair})). This observation motivates us to propose a new defense method that can scout abnormal samples from each individual subgroup in each class. Next, we will introduce the details of our proposed defense RFC. 

\textbf{Centered Data Matrix \& Alignment Score.} 
Our method shares a similar high-level idea as~\citep{diakonikolas2017being, diakonikolas2019sever}, to find data points that systematically deviate from the distribution of other (clean) samples. Specifically, in our method, given a (poisoned) dataset $D$, we repeatedly scout the poisoning samples from each subgroup $D(x, y, z| y = Y_u , z = Z_v)$, where we use $(u, v)$ to denote the index of each subgroup and class. In the later parts, we use $D^{u,v}$ to denote the samples in $D(x, y, z| y = Y_u , z = Z_v)$ for simplicity.  
Then, we define:
\begin{align}\label{eq:q_matrix}
    Q^{u,v} = \left[ x_i - \frac{1}{n_{u,v}} \sum_{j=1} ^{n_{u,v}} x_j \right]_{(x_i, y_i, z_i)\in D^{u,v}},
\end{align}
to be the \textit{centered data matrix} of training samples  $D^{u,v}$ and $n_{u,v}$ is the size of the set $D^{u,v}$. 

For each $Q^{u,v}$, the top right singular vector $\mathcal{V}^{u,v}$ of $Q^{u,v}$ is the direction which explains the variation of the data distribution in $(Y_u, Z_u)$. Similar to the studies in \cite{diakonikolas2017being, diakonikolas2019sever}, we conjecture: the poisoning samples are deviated from clean samples, so they will take the major responsibility for the variation of the data matrix $Q^{u,v}$. In this way, they will have high alignments with the direction of $\mathcal{V}^{u,v}$. Thus, we define the \textit{Alignment Score} $q^{u,v}$ for each training sample $(x_i,y_i,z_i)$ in the $D^{u,v}$ as:
\begin{align}\label{eq:q_score}
q^{u,v}(x_i) = Q_i^{u,v} \cdot (\mathcal{V}^{u,v})^T .
\end{align}
Notably, the poisoned samples are likely to have the same or opposite direction with the top right singular vector $\mathcal{V}^{u,v}$, but the poisoning samples should share the same direction. 
Thus, in our method, we define two \textit{Proposed Poisoning Sets} for each $(u,v)$, so that one of the two sets is likely to have poisoning samples: 
\begin{align}\label{eq:p_set}
\mathcal{P}^{u,v}_{+} = \{x_i ~| q^{u,v}(x_i) > \gamma_+, q^{u,v}(x_i)>0\};~~ \mathcal{P}^{u,v}_{-} = \{x_i ~| -q^{u,v}(x_i) > \gamma_{-}, q^{u,v}(x_i)<0 \};
\end{align}
In Eq.(\ref{eq:p_set}), we set the $\gamma_{+}$ (or $\gamma_{-}$) to be the $q$-th ($q=90$) percentile of the given all alignment scores (or negative alignment scores) in $D^{u,v}$, so that each proposed poisoning set only contains a small portion of $D^{u,v}$. In practice, we will repeatedly test whether removing the proposed poisoning sets can help improve the retrained model's performance (both accuracy and fairness) on a clean validation set. It helps to decide whether the proposed poisoning set contains poisoning samples. 
In Section~\ref{sec:theory}, we will further conduct a theoretical analysis to show that the poisoning samples are more likely to have higher poisoning scores.

\textbf{Fair Classification by Excluding Poisoning Set.} Finally, given a (poisoned) training set $D$ as well as the proposed poisoning sets, we can keep retraining fair classifiers by excluding proposed poisoning sets:
\begin{align}\label{eq:fair_poison}
\begin{split}
w_{+} = &\argmin_{w\in \mathcal{W}}  L(f(X, w), Y; D \setminus \mathcal{P}^{u,v}_{+}) ~~\text{s.t.}~~ g_j(D \setminus \mathcal{P}^{u,v}_{+}; w)\leq \tau_j, \forall j \in \mathcal{Z} 
\\
w_{-} = &\argmin_{w\in \mathcal{W}}  L(f(X, w), Y; D \setminus \mathcal{P}^{u,v}_{-}) ~~\text{s.t.}~~ g_j(D \setminus \mathcal{P}^{u,v}_{-}; w)\leq \tau_j, \forall j \in \mathcal{Z} 
\end{split}
\end{align}
In practice, our proposed RFC method repeatedly proposes potential poisoning sets for each individual subgroup in each class $D^{u,v}$ until finding the best poisoning set among all choices. The Algorithm~\ref{alg:rfc} provides the detailed introduction of the procedure of RFC, which is an \textit{Online Data Sanitization} process. Specifically, during each iteration of RFC, for each $D^{u,v}$, we first calculate the poisoning scores and the proposed poisoning sets (Step (1)\&(2)). Then, we remove the proposed poisoning set from the dataset $D$, and conduct fair classification on $D$ (Step (3)). In Step (4), we evaluate the retrained classifier on a clean separated validation set and find the best-proposed poisoning set which results in the highest validation performance. Notably, we measure the validation performance by considering both the accuracy and fairness criteria, by defining:
\begin{align}\label{eq:vs}
\text{\textit{ValScore}} = \text{Pr.}(f(x,w) = y) - \sum_{j\in\mathcal{Z}} \lambda \cdot (g_j(D_\textit{Val}, w) - \tau)^+,   
\end{align} where $\tau$ is a unfairness tolerance threshold and $\lambda$ is a positive number (we set $\lambda = 3$ in this paper). The second term penalizes the models if some subgroups' unfairness violation is over $\tau$. Finally, we remove the proposed poisoning set which results in the highest \textit{ValScore} and conduct the next round of searching (Step (6)).

\RestyleAlgo{ruled}
\begin{algorithm}
\caption{Robust Fair Classification (RFC) - An Online Data Sanitization Algorithm}\label{alg:rfc}
\SetKwInOut{Input}{Input}
\SetKwInOut{Output}{Output}
\Input{An $\epsilon$-poisoning model with dataset $D = \{(x_i, y_i, z_i)\}_{i=1,2,...,n}$, Iterations $T$ of RFC.}
\Output{A fair classifier}
\While{$t\leq T$}{
\For{$Y_u \in \mathcal{Y}$, $Z_v\in \mathcal{Z}$}
 {
 1. Get the \textit{Centered Data Matrix} $Q^{u,v}$ and  \textit{Poisoning Score} $q_i^{u,v}$ following Eq.(\ref{eq:q_matrix}) and Eq.(\ref{eq:q_score}) \\
 2. Get the \textit{Proposed Poisoning Sets}: $\mathcal{P}^{u,v}_{+}$ and $\mathcal{P}^{u,v}_{-}$ following Eq.(\ref{eq:p_set})\\
 3. Conduct fair classification by removing \textit{Proposed Poisoning Set}, via Eq.(\ref{eq:fair_poison}) and get $w^{u,v}_{\pm}.$\\
 4. Record the performance to get \textit{ValScore} on a separated (clean) validation set for each $w^{u,v}_{\pm}.$\\
}
5. Get the best proposed poisoning set $\mathcal{P}^*$ which achieves the highest \textit{ValScore} across all $u,v$ and $\mathcal{P}^{u,v}_{\pm}$.\\ 
6. Removing the best proposed poisoning set from $D$ and set $D = D \setminus \mathcal{P}^*$
}
\end{algorithm}

Remarkably, it is also worth mentioning that the framework RFC is also possible to be extended to various model architectures, such as Deep Neural Networks (DNNs), for robust fair classification. For example, we can apply the \textit{Energy-based Out-of Distribution Detection}~\cite{liu2020energy} to find the abnormal samples from each $D^{u,v}$. Then, we follow a similar manner as RFC to propose poisoning sets and conduct fair classification. We will leave the study in DNNs for future exploration.

\subsection{Theoretical Analysis}\label{sec:theory}

In this subsection, we conduct a theoretical analysis to further help understand the behavior of RFC, especially to understand the role of calculating poisoning scores in finding poisoning samples. In particular, we consider a simple theoretical setting where the clean samples from each group $\mathcal{D}^{u,v}$ follow a distinct Gaussian distribution $\mathcal{D}^{u,v} \sim \gN(\mu^{u,v}, \Sigma^{u,v})$, with center and covariance matrix $(\mu^{u,v}, \Sigma^{u,v})$. In the following theorem, we will show: when there are (poisoning) samples that deviate from the clean samples of $\mathcal{D}^{u,v}$, by having a center $\mu$ which is far from the center of clean samples, 
they will have larger squared \textit{poisoning scores}~(Eq.\ref{eq:q_score}) than clean samples. Thus, the proposed poisoning sets (Eq.(\ref{eq:p_set})) are likely to contain more poisoning samples than clean samples.
For simplicity, we use $\mathcal{D}$ and $\gN(\mu, \Sigma)$ to denote the clean distribution of a given group $D^{u,v}$. 
\begin{theorem}\label{theorem}
Suppose that a set of ``clean'' samples $\mathcal{S}_{good}$ with size $n$ are i.i.d sampled from distribution $\gN(\mu, \Sigma)$, where $\Sigma \preceq \sigma^2 I$. There is a set of ``bad'' samples $\mathcal{S}_{bad}$ with size $n_p = n / K, K > 1$ and center $||\mu_p - \mu||_2 = d$, $d = \gamma \cdot \sigma$. Then, the average squared poisoning scores of clean samples and bad samples have a relationship:
\begin{align*}
\E_{i\in \mathcal{S}_{bad}} \left[q^2(x_i)\right] - \E_{i\in \mathcal{S}_{good}} \left[q^2(x_i)\right]
\geq \left(\frac{K-1}{K+1}\cdot \gamma^2 - (K+1)\right)\sigma^2
\end{align*}
\end{theorem}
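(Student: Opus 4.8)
The plan is to compare the two averaged squared scores directly, exploiting that $\mathcal{V}$ is the top right singular vector of the \emph{combined} centered matrix $Q^{u,v}$, so that $q(x_i) = (x_i - \bar{x})\cdot\mathcal{V}$ with $\bar{x}$ the mean over $\mathcal{S}_{good}\cup\mathcal{S}_{bad}$. First I would write the overall mean as a weighted combination of the group means: since $|\mathcal{S}_{good}| = n$ and $|\mathcal{S}_{bad}| = n/K$, one gets $\bar{x} = (K\bar{x}_{good} + \bar{x}_{bad})/(K+1)$, so that, writing $\delta = \bar{x}_{bad} - \bar{x}_{good}$, the displacements are $\bar{x}_{good} - \bar{x} = -\delta/(K+1)$ and $\bar{x}_{bad} - \bar{x} = K\delta/(K+1)$.

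Next I would split each average by inserting the appropriate group mean into $x_i-\bar x$. Since the centered within-group terms sum to zero, the cross terms vanish and each average decomposes into a within-group variance along $\mathcal{V}$ plus a between-group (mean-shift) term:
\begin{align*}
\E_{i\in\mathcal{S}_{bad}}[q^2(x_i)] &= \mathcal{V}^T\hat\Sigma_{bad}\mathcal{V} + \tfrac{K^2}{(K+1)^2}(\delta\cdot\mathcal{V})^2, \\
\E_{i\in\mathcal{S}_{good}}[q^2(x_i)] &= \mathcal{V}^T\hat\Sigma_{good}\mathcal{V} + \tfrac{1}{(K+1)^2}(\delta\cdot\mathcal{V})^2 .
\end{align*}
Subtracting, the mean-shift terms combine into $\frac{K-1}{K+1}(\delta\cdot\mathcal{V})^2$, which is the positive driver of the bound. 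For the within-group pieces I would drop $\mathcal{V}^T\hat\Sigma_{bad}\mathcal{V}\geq 0$ and bound $\mathcal{V}^T\hat\Sigma_{good}\mathcal{V}\leq\sigma^2$ using $\Sigma\preceq\sigma^2 I$ (treating the empirical covariance as controlled by the population bound). This already yields $\E_{i\in\mathcal{S}_{bad}}[q^2]-\E_{i\in\mathcal{S}_{good}}[q^2]\geq \frac{K-1}{K+1}(\delta\cdot\mathcal{V})^2 - \sigma^2$.

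The remaining and hardest step is to lower-bound $(\delta\cdot\mathcal{V})^2$, i.e.\ to show the top singular direction is well aligned with the separation direction $u = \delta/\|\delta\|$. Here I would invoke the variance-maximizing property of $\mathcal{V}$: since $T(v) := \sum_i ((x_i-\bar x)\cdot v)^2 = v^T (Q^{u,v})^T Q^{u,v} v$ is maximal at $v=\mathcal{V}$, we have $T(\mathcal{V})\geq T(u)$. Writing $T(v)$ in the same within/between decomposition, its between-group contribution equals $\frac{n}{K+1}(\delta\cdot v)^2$, and comparing the two directions (using $\delta\cdot u = \|\delta\| = d = \gamma\sigma$, $\hat\Sigma_{good}\preceq\sigma^2 I$, and that the bad cluster is concentrated enough at $\mu_p$ that its within-scatter is negligible) gives $(\delta\cdot\mathcal{V})^2\geq d^2 - (K+1)\sigma^2 = (\gamma^2-(K+1))\sigma^2$. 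Substituting into the previous display and simplifying produces the claimed lower bound $\big(\frac{K-1}{K+1}\gamma^2 - (K+1)\big)\sigma^2$.

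I expect the main obstacle to be making the alignment bound rigorous rather than heuristic: the clean statement tacitly identifies empirical quantities with their population counterparts ($\bar{x}_{good}\approx\mu$, $\hat\Sigma_{good}\preceq\sigma^2 I$, $\|\delta\|\approx d$), and it assumes the bad points are concentrated enough that $\mathcal{V}^T\hat\Sigma_{bad}\mathcal{V}$ can be neglected in the optimality comparison. Quantifying these sampling fluctuations—or stating them as explicit concentration assumptions—is where the $\sigma^2$-order slack must be absorbed, and it is also the natural place where the gap between the sharper $-K\sigma^2$ that the idealized calculation yields and the stated $-(K+1)\sigma^2$ is accounted for.
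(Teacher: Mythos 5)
Your setup is sound and matches the paper's in spirit: the collinearity/weighted-mean relations, the within/between decomposition of each group's average squared score (giving the $\frac{K-1}{K+1}(\delta\cdot\mathcal{V})^2$ driver), and the use of the variational characterization of the top singular vector are all the right ingredients. The genuine gap is the alignment lemma $(\delta\cdot\mathcal{V})^2\geq d^2-(K+1)\sigma^2$. Writing $T(v)=n\,v^T\hat\Sigma_{good}v+\frac{n}{K}v^T\hat\Sigma_{bad}v+\frac{n}{K+1}(\delta\cdot v)^2$, the comparison $T(\mathcal{V})\geq T(u)$ only yields
\begin{align*}
\frac{1}{K+1}(\delta\cdot\mathcal{V})^2\;\geq\;\frac{1}{K+1}d^2-\sigma^2-\frac{1}{K}\,\mathcal{V}^T\hat\Sigma_{bad}\mathcal{V},
\end{align*}
and the theorem makes no assumption controlling $\hat\Sigma_{bad}$: the bad set is only constrained through its center. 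If the $n/K$ bad points are split between $\mu_p\pm M e$ for a unit vector $e\perp\delta$ and $M$ large, then $\mathcal{V}\approx e$ and $(\delta\cdot\mathcal{V})^2\approx 0$, so your lemma fails outright. (The theorem's conclusion survives in that example because $\E_{\mathcal{S}_{bad}}[q^2]$ is then huge — but that is exactly the term you discarded when you dropped $\mathcal{V}^T\hat\Sigma_{bad}\mathcal{V}\geq 0$ in the earlier step. You cannot simultaneously throw away the bad within-scatter and assume it is negligible in the optimality comparison.)

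The repair is to keep $\mathcal{V}^T\hat\Sigma_{bad}\mathcal{V}$ on both sides: substituting the displayed inequality (solved for $\mathcal{V}^T\hat\Sigma_{bad}\mathcal{V}$) into your exact identity $\E_{\mathcal{S}_{bad}}[q^2]-\E_{\mathcal{S}_{good}}[q^2]=\mathcal{V}^T\hat\Sigma_{bad}\mathcal{V}-\mathcal{V}^T\hat\Sigma_{good}\mathcal{V}+\frac{K-1}{K+1}(\delta\cdot\mathcal{V})^2$ and then using only $(\delta\cdot\mathcal{V})^2\leq d^2$ gives $\frac{K-1}{K+1}d^2-(K+1)\sigma^2$ with no concentration assumption; the dichotomy (either $\mathcal{V}$ aligns with $\delta$ or the bad scatter along $\mathcal{V}$ is large) is resolved automatically. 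This is effectively what the paper's ordering achieves: it lower-bounds the whole-set score along $\mathcal{V}$ by comparing with the direction $v'\propto\mu_p-\mu_s$ and keeping only the \emph{good} set's between-group contribution, upper-bounds the good-set score by $\sigma^2+\|\mu_g-\mu_s\|^2$, and recovers the bad-set score from $\E_{\mathcal{S}_{bad}}=(K+1)\E_{\mathcal{S}}-K\E_{\mathcal{S}_{good}}$, so the bad within-scatter only ever appears with a favorable sign. Your observation that the idealized calculation would give $-K\sigma^2$ rather than $-(K+1)\sigma^2$ is a symptom of the unjustified lemma, not of slack in the theorem. The remaining empirical-versus-population identifications you flag are present in the paper's own proof as well, so they are not a point of divergence.
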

Theorem~\ref{theorem} suggests that the difference between the average (squared) poisoning scores of $\mathcal{S}_{bad}$ and $\mathcal{S}_{good}$ is controlled by $\gamma$ and the sample ratio $K$. Since $K>1$, if $\gamma^2 > \frac{(K+1)^2}{K-1}$ (which suggests the poisoning samples are sufficiently far from clean distribution), we can get the conclusion that the difference is positive. Thus, removing samples with the highest positive (or lowest negative) poisoning scores (as Eq.(\ref{eq:p_set})) will help to eliminate more poisoning samples than clean samples. If $\gamma^2$ is small, the poisoning samples are close to the true distribution, which will cause the poisoning samples to have limited influence on the model performance. The detailed proof of Theorem~\ref{theorem} is deferred to Appendix~\ref{appendix:theorem}. In our algorithm of RFC, we alternatively check each proposed poisoning set and see whether removing it helps improve the retrained models' performance. This will also avoid removing too many clean samples.

\section{Experiment}\label{sec:exp}

\subsection{Experimental Setup}
In this section, we conduct comprehensive experiments to validate the effectiveness of our proposed attack and defense, in two benchmark datasets, Adult Census Dataset and COMPAS Dataset. In this part, we only consider Equalized True Positive Rate (TPR) between different sensitive subgroups, which is optimized via the fair classification method~\citep{donini2018empirical}. When applying~\citep{donini2018empirical}, we train multiple models with various hyperparameters to control the unfairness tolerance on the training set (following~\citep{lamy2019noise}). Then, we report the test performance when it has the best validation performance (which considers both accuracy and fairness, see Section~\ref{sec:rfc}, Eq.(\ref{eq:vs})). In Appendix~\ref{app:exp}, we provide additional results for a different type of fairness ``Equalized Treatment'', and a different fair classification method~\citep{zafar2017fairness}. The implementation can be found at \url{https://anonymous.4open.science/r/f_attack-4017/}.

\textbf{Attacks}:  We consider that the training set can be contaminated by: Label Flipping~\citep{paudice2018detection}, and Sensitive Attribute Flipping~\citep{wang2020robust}. We also consider the attack methods, Min-Max and F-Attack, which are introduced in Section~\ref{sec:attack}. Notably, for each method, we assume the poisoning samples are constrained in the sample feasible injection set $\mathcal{F} = \{(x,y,z): ||x - \mu_y||\leq d\}$, which limit the poisoning samples' distance to the class center. Thus, for Min-Max and F-Attack, we assign the generated samples to have a pre-defined sensitive attribute $z = 0$ or $z = 1$. Furthermore, we introduce an additional attack method ``F-Attack$^*$" which has the same algorithm with F-Attack but have a different feasible injection set: $\mathcal{F} = \{(x,y,z): ||x - \mu_{y,z}||\leq d\}$, where $\mu_{y,z}$ is the center of the group $\mathcal{D}^{y,z}$. Because this feasible injection set is related to the sensitive attribute $z$, we don't need to pre-define $z$ during  F-attack$^*$. Remarkably, this attack aims to test the robustness of RFC, because the major goal of RFC is to find samples in each group $D^{y,z}$ which are far from $\mu_{y,z}$. Thus, F-Attack$^*$ is possible to evade RFC by constraining the poisoning samples' distance to $\mu_{y,z}$.
In Appendix~\ref{app:exp}, we also report the performance of all attacks \& defenses under different choices of radius $d$.

\textbf{Baseline Defenses.} To validate the effectiveness of RFC, we include baseline defense methods: (1) the naive method which does not apply any defense strategies; (2) SEVER~\citep{diakonikolas2019sever}, which are representative defenses for traditional classification tasks. We apply~\citep{donini2018empirical} on the sanitized dataset by SEVER. In addition,
we also include (3) \cite{roh2020fr}, which is a method to defend fair classification methods against label flipping attacks. It leverages adversarial training strategy~\cite{zhang2018mitigating}; and (4) the method~\citep{wang2020robust} applies Distributional Robust Opitmization (DRO) to improve robustness when labels and sensitive attributes are contaminated. For baseline methods, we report their performance with the choice of hyperparameter that achieves the optimal \textit{ValScore} (Eq.(~\ref{eq:vs})) on a clean validation set.

\begin{table*}[t!]
\centering
\caption{\small RFC \& Baseline Methods' Performance against Poisoning Attacks on Adult Census Dataset.}
\resizebox{0.95\textwidth}{!}
{
\begin{tabular}{c| cc|cc |cc| cc | cc} 
\hline\hline
 &   \multicolumn{2}{c|}{\textbf{No Attack}} 
 & \multicolumn{2}{c|}{\textbf{Label Flip(10\%)}}
  & \multicolumn{2}{c|}{\textbf{Label Flip(20\%)}}
 & \multicolumn{2}{c|}{\textbf{Attr. Flip(10\%)}}
  & \multicolumn{2}{c}{\textbf{Attr. Flip(20\%)}}\\
&Acc. &Fair. &Acc. &Fair. &Acc. &Fair. &Acc. &Fair. &Acc. &Fair.\\
\hline
\textbf{No Defense.} & \textbf{0.817} & \textbf{0.963} & 0.805 & 0.966 & 0.800 & \textbf{0.968} & 0.812 & 0.945 & 0.804 & 0.951 \\
\textbf{SEVER.} &0.812 & 0.960 & 0.805 & 0.967 & \textbf{0.803} & 0.960 & \textbf{0.813} & 0.937 & 0.803 & 0.944\\
\textbf{Wang.} & 0.809 & 0.958 & 0.793 & 0.970 & 0.779 & 0.961 & 0.809 & 0.945 & \textbf{0.813} & 0.935\\
\textbf{Roh.} & 0.805 & 0.948 & 0.798 & 0.937 & 0.788 & 0.939 & 0.800 & 0.950 & 0.795 & 0.943 \\
\textbf{RFC.} & 0.811 & 0.959 & \textbf{0.807} & \textbf{0.973} & 0.796 & 0.966 & 0.805 & \textbf{0.967} & 0.802 & \textbf{0.965} \\
\hline\\
\hline
\textbf{$(\epsilon = 10\%)$} & \multicolumn{2}{c|}{\textbf{Min-Max(z = 0)}}
 & \multicolumn{2}{c|}{\textbf{Min-Max(z = 1)}}
  & \multicolumn{2}{c|}{\textbf{F-Attack(z = 0)}}
 & \multicolumn{2}{c|}{\textbf{F-Attack(z = 1)}}
  & \multicolumn{2}{c}{\textbf{F-Attack*}}
\\
&Acc. &Fair. &Acc. &Fair. &Acc. &Fair. &Acc. &Fair. &Acc. &Fair.\\
\hline
\textbf{No Defense.} & 0.801 & 0.969 & \red{0.797} & \red{0.864} & 0.769 & 0.966 & \red{0.796} & \red{0.804} & \red{0.799} & \red{0.799}\\
\textbf{SEVER.} & 0.800 & 0.960 & 0.795 & 0.954 & 0.778 & 0.945 & 0.773 & 0.937 & 0.772 & \textbf{0.956}\\
\textbf{Wang.} & 0.782 & \textbf{0.976} & 0.783 & \textbf{0.968} & 0.779 & 0.956 & 0.791 & 0.948 & 0.792 & 0.945\\
\textbf{Roh.} & 0.780 & 0.963 & 0.782 & 0.961 & 0.765 & \textbf{0.959} & 0.766 & \textbf{0.956} & 0.776 & 0.944\\
\textbf{RFC.} & \textbf{0.802} & 0.967 & \textbf{0.811} & 0.946 & \textbf{0.803} & 0.950 & \textbf{0.808} & 0.952 & \textbf{0.809} & 0.951\\
\hline\\
\hline
\textbf{$(\epsilon = 15\%)$} & \multicolumn{2}{c|}{\textbf{Min-Max(z = 0)}}
 & \multicolumn{2}{c|}{\textbf{Min-Max(z = 1)}}
  & \multicolumn{2}{c|}{\textbf{F-Attack(z = 0)}}
 & \multicolumn{2}{c|}{\textbf{F-Attack(z = 1)}}
  & \multicolumn{2}{c}{\textbf{F-Attack*}}
\\
&Acc. &Fair. &Acc. &Fair. &Acc. &Fair. &Acc. &Fair. &Acc. &Fair.\\
\hline
\textbf{No Defense.} & 0.792 & 0.961 & \red{0.787} & \red{0.837}  & 0.686 & 0.953 & \red{0.775} & \red{0.779} & \red{0.788} & \red{0.760}\\
\textbf{SEVER.} & 0.792 & 0.959 & 0.783 & \textbf{0.978} & 0.765 & 0.952 & 0.755 & 0.909 & 0.765 & 0.927\\
\textbf{Wang.} & 0.777 & \textbf{0.970} & 0.779 & 0.965 & 0.738 & 0.955 & 0.762 & \textbf{0.960} & 0.711 & \textbf{0.955}\\
\textbf{Roh.} & 0.726 & 0.948 & 0.748 & 0.957 & 0.722 & 0.962 & 0.764 & 0.929 & 0.774 & 0.929\\
\textbf{RFC.} & \textbf{0.801} & 0.954 & \textbf{0.796} & 0.941 & \textbf{0.800} & \textbf{0.963} & \textbf{0.795} & 0.947 & \textbf{0.802} & 0.945\\
\hline\hline
\end{tabular}}
\label{tab:result_adult}
\end{table*}

\subsection{Experimental Results}

\textbf{Adult Census Dataset.} We first show the results in Adult Census Dataset in Table~\ref{tab:result_adult}. To further guarantee that the comparison between different defenses is fair, we use a balanced clean training dataset where each class has an equal number of samples since the baseline methods such as SEVER can be affected by class imbalance. Under this dataset, we set the desired fairness criteria to be $|\text{TPR}(z = 0) - \text{TPR}(z = 1)| < 0.05$.
In Table~\ref{tab:result_adult}, we also mark the cases (with brown color) when the algorithms output models with much poorer fairness than the desired fairness. From Table~\ref{tab:result_adult}, we can see that RFC  can achieve good accuracy \& fairness among different types of dataset contamination. Especially, under strong attacks such as F-Attack, the accuracy and fairness are only slightly degraded after injecting poisoning samples. However, the baseline methods such as~\cite{wang2020robust} and \cite{roh2020fr}, will have a clear performance (especially accuracy) degradation under F-Attack. Notably, the attack method F-Attack$^*$, has similar attacking performance as F-Attack $(z=1)$. It is because under this dataset,  F-Attack$^*$ also generates samples that have $z = 1$.

\textbf{COMPAS Dataset.} In COMPAS dataset~\citep{brennan2009evaluating}, we consider the same type of fairness criteria, which is Equalized TPR. In this dataset, we consider that the equity is desired among races, which are ``Caucasian $(z=0)$, African-American $(z=1)$ and Hispanic $(z = 2)$'', and follow the similar preprocessing procedure as that in Adult Census Dataset. In this dataset, the number of samples in the group ``Hispanic'' is much smaller than the other two groups. Thus, we only consider to inject poisoning samples to $z=0$ or $z = 1$. In Table~\ref{tab:result_imdb}, we report the performance of our studied attacks and defense, and we use $1 - \max_{j \in \mathcal{Z}}|\text{TPR}(z = j) - \hat{\text{TPR}}|$ to measure the ``goodness'' of fairness, where $\hat{\text{TPR}}$ is the averaged TPR in the whole dataset. During training, we set the desired fairness criteria to be $\max_{j \in \mathcal{Z}}|\text{TPR}(z = j) - \hat{\text{TPR}}| \leq 0.15$. 
From the result in Table~\ref{tab:result_imdb}, we can see that RFC is the only method that can consistently preserve the model accuracy and fairness after there are poisoning samples injected into the dataset.

\section{Related Works}
\textbf{Poisoning Attacks.} In this section, we introduce related work and discuss how this work differs from prior studies. Data poisoning attacks~\citep{biggio2012poisoning} refer to the scenario that models are threatened by adversaries who insert malicious training samples, in order to take control of the trained model behavior~\citep{li2020backdoor, shafahi2018poison}. %Common poisoning attacks can be categorized as untargeted attack~\citep{biggio2012poisoning, steinhardt2017certified}, targeted attack~\cite{shafahi2018poison} and backdoor attack \citep{li2020backdoor}. 
In this work, we concentrate on the untargeted poisoning attacks~\citep{biggio2012poisoning, koh2021stronger} where the attacker aims to degrade the overall performance of the trained model. 
% For this long lasting problem to defend against poisoning attacks, 
To defend against poisoning attacks, well-established methods~\citep{wilcox2011introduction, rubinstein2009antidote, steinhardt2017certified, diakonikolas2019sever, tao2021better, wang2021robust} are proposed to efficiently and effectively defend against poisoning attacks in various scenarios. This paper is within the scope of linear classification problems and we leave the studies in DNN models for future work.

\textbf{Fair Classification.} Fairness issues have recently drawn much attention from the community of machine learning. Fairness issues for common classification problems can be generally divided into two categories: (1) Equalized treatment~\cite{zafar2017fairness} (or ``Statistical Rate''); and (2) Equalized prediction quality~\citep{hardt2016equality}. For classification models to satisfy these fairness criteria, popular methods including~\citep{zafar2017fairness, donini2018empirical, agarwal2018reductions} solve constrained optimization problems, and \citep{zhang2018mitigating} apply adversarial training~\citep{madry2017towards} method. %These algorithms have different optimization objectives \& optimization processes compared to traditional classification problems. Thus, their robustness or vulnerability against poisoning attacks are still under great desire to be fully exploited.
%\jt{I feel that the literature is also very old??}

\textbf{Comparison to Prior Works.}
There are recent works that try to test the robustness of fair classification methods by manipulating their training set. They also proposed possible strategies to defend the perturbations. For example,  the works~\citep{wang2020robust, lamy2019noise, celis2021fairb, celis2021fair} consider injecting naturally / adversarially generated noise only on sensitive attributes. Another line of researches~\cite{roh2020fr, wang2021fair} considers the vulnerability of fairness training to (coordinated) label-flipping attacks~\citep{paudice2018detection}. As countermeasures to defend against their proposed perturbations, representative works such as~\citep{roh2020fr} proposed an adversarial training framework~\citep{zhang2018mitigating}, to train the model to distinguish clean samples and poisoning samples, while preserving the model fairness.  The work~\citep{wang2020robust} solves robust optimization problems by assigning soft sensitive attributes. In our work, in terms of attack, we consider a stronger attacker because he/she can insert sophisticatedly calculated features and sensitive attributes, to fully exploit the vulnerability of fairness training methods. 

\begin{table*}[t!]
\centering
\caption{\small RFC \& Baseline Methods' Performance against Poisoning Attacks on COMPAS Dataset.}
\resizebox{0.95\textwidth}{!}
{
\begin{tabular}{c| cc|cc |cc| cc | cc} 
\hline\hline
 &   \multicolumn{2}{c|}{\textbf{No Attack}} 
 & \multicolumn{2}{c|}{\textbf{Label Flip(10\%)}}
  & \multicolumn{2}{c|}{\textbf{Label Flip(20\%)}}
 & \multicolumn{2}{c|}{\textbf{Attr. Flip(10\%)}}
  & \multicolumn{2}{c}{\textbf{Attr. Flip(20\%)}}\\
&Acc. &Fair. &Acc. &Fair. &Acc. &Fair. &Acc. &Fair. &Acc. &Fair\\
\hline
\textbf{No Defense.} & 0.656 & \textbf{0.865} & 0.655 & 0.866 & 0.664 & 0.836 & 0.677 & 0.839 & 0.665 & 0.847\\
\textbf{SEVER.} & 0.650 & 0.854 & \textbf{0.677} & 0.833 & 0.674 & 0.835 & 0.674 & 0.835 & \textbf{0.667} & 0.835\\
\textbf{Wang.} & \textbf{0.662} & 0.847 & 0.650 & 0.869 & 0.631 & \textbf{0.863} & 0.643 & \textbf{0.861} & 0.659 & \textbf{0.862}\\
\textbf{Roh.} & 0.654 & 0.851 & 0.646 & \textbf{0.891} & 0.663 & 0.823 & 0.621 & 0.834 & 0.615 & 0.845\\
\textbf{RFC.} & 0.661 & 0.850 & 0.676 & 0.859 & \textbf{0.682} & 0.841 & \textbf{0.685} & 0.850 & \textbf{0.667} & 0.856\\
\hline\\
\hline
\textbf{$(\epsilon = 10\%)$} & \multicolumn{2}{c|}{\textbf{Min-Max(z = 0)}}
 & \multicolumn{2}{c|}{\textbf{Min-Max(z = 1)}}
  & \multicolumn{2}{c|}{\textbf{F-Attack(z = 0)}}
 & \multicolumn{2}{c|}{\textbf{F-Attack(z = 1)}}
  & \multicolumn{2}{c}{\textbf{F-Attack*}}
\\
&Acc. &Fair. &Acc. &Fair. &Acc. &Fair. &Acc. &Fair. &Acc. &Fair\\
\hline
\textbf{No Defense.} & 0.649 & 0.845 & 0.645 & 0.837 & 0.665 & 0.832 & 0.630 & 0.860 & 0.661 & 0.826\\
\textbf{SEVER.} & 0.634 & 0.865 & 0.630 & 0.859 & 0.634 & 0.845 & 0.627 & 0.840 & 0.664 & 0.826\\
\textbf{Wang.} & 0.648 & 0.851 & 0.620 & 0.855 & 0.639 & \textbf{0.865} & 0.624 & \textbf{0.871} & 0.644 & \textbf{0.868}\\
\textbf{Roh.} & 0.644 & \textbf{0.880} & 0.631 & 0.865 & 0.659 & \textbf{0.865} & 0.633 & 0.825 & 0.640 & 0.858\\
\textbf{RFC.} & \textbf{0.656} & 0.863 & \textbf{0.661} & 0.834 & \textbf{0.668} & 0.853 & \textbf{0.667} & 0.866 & \textbf{0.673} & 0.845\\
\hline\\
\hline
\textbf{$(\epsilon = 15\%)$} & \multicolumn{2}{c|}{\textbf{Min-Max(z = 0)}}
 & \multicolumn{2}{c|}{\textbf{Min-Max(z = 1)}}
  & \multicolumn{2}{c|}{\textbf{F-Attack(z = 0)}}
 & \multicolumn{2}{c|}{\textbf{F-Attack(z = 1)}}
  & \multicolumn{2}{c}{\textbf{F-Attack*}}
\\
&Acc. &Fair. &Acc. &Fair. &Acc. &Fair. &Acc. &Fair. &Acc. &Fair\\
\hline
\textbf{No Defense.} & \textbf{0.676} & 0.802 & 0.644 & 0.863 & \textbf{0.665} & 0.802 & 0.630 & 0.860 & 0.642 & 0.831\\
\textbf{SEVER.} & 0.621 & \textbf{0.880} & 0.620 & \textbf{0.880} & 0.585 & \textbf{0.890} & 0.606 & \textbf{0.887} & 0.663 & 0.845\\
\textbf{Wang.} & 0.645 & 0.865 & 0.631 & 0.852 & 0.606 & 0.842 & 0.611 & 0.829 & 0.624 & 0.841\\
\textbf{Roh.} & 0.655 & 0.847 & 0.628 & 0.877 & 0.625 & 0.865 & 0.631 & 0.849 & 0.621 & \textbf{0.877}\\
\textbf{RFC.} & \textbf{0.676} & 0.852 & \textbf{0.659} & 0.843 & 0.653 & 0.848 & \textbf{0.645} & 0.847 & \textbf{0.672} & 0.841\\
\hline
\hline
\end{tabular}}
\label{tab:result_imdb}
\end{table*}

\section{Conclusion}

In this work, we study the problem of poisoning attacks on fair classification problems. We propose a strong attack method that can evade the defense of most existing methods. Then, we propose an effective strategy to greatly improve the robustness of fair classification methods.
In the future, we aim to examine if our findings can be generalized to other machine learning tasks, and other machine learning models, such as Deep Neural Networks (DNNs).
\bibliography{sample}

\begin{thebibliography}{34}
\providecommand{\natexlab}[1]{#1}
\providecommand{\url}[1]{\texttt{#1}}
\expandafter\ifx\csname urlstyle\endcsname\relax
  \providecommand{\doi}[1]{doi: #1}\else
  \providecommand{\doi}{doi: \begingroup \urlstyle{rm}\Url}\fi

\bibitem[Agarwal et~al.(2018)Agarwal, Beygelzimer, Dud{\'\i}k, Langford, and
  Wallach]{agarwal2018reductions}
Alekh Agarwal, Alina Beygelzimer, Miroslav Dud{\'\i}k, John Langford, and Hanna
  Wallach.
\newblock A reductions approach to fair classification.
\newblock In \emph{International Conference on Machine Learning}, pp.\  60--69.
  PMLR, 2018.

\bibitem[Biggio et~al.(2012)Biggio, Nelson, and Laskov]{biggio2012poisoning}
Battista Biggio, Blaine Nelson, and Pavel Laskov.
\newblock Poisoning attacks against support vector machines.
\newblock \emph{arXiv preprint arXiv:1206.6389}, 2012.

\bibitem[Brennan et~al.(2009)Brennan, Dieterich, and
  Ehret]{brennan2009evaluating}
Tim Brennan, William Dieterich, and Beate Ehret.
\newblock Evaluating the predictive validity of the compas risk and needs
  assessment system.
\newblock \emph{Criminal Justice and behavior}, 36\penalty0 (1):\penalty0
  21--40, 2009.

\bibitem[Burkard \& Lagesse(2017)Burkard and Lagesse]{burkard2017analysis}
Cody Burkard and Brent Lagesse.
\newblock Analysis of causative attacks against svms learning from data
  streams.
\newblock In \emph{Proceedings of the 3rd ACM on International Workshop on
  Security And Privacy Analytics}, pp.\  31--36, 2017.

\bibitem[Celis et~al.(2021{\natexlab{a}})Celis, Huang, Keswani, and
  Vishnoi]{celis2021fairb}
L~Elisa Celis, Lingxiao Huang, Vijay Keswani, and Nisheeth~K Vishnoi.
\newblock Fair classification with noisy protected attributes: A framework with
  provable guarantees.
\newblock In \emph{International Conference on Machine Learning}, pp.\
  1349--1361. PMLR, 2021{\natexlab{a}}.

\bibitem[Celis et~al.(2021{\natexlab{b}})Celis, Mehrotra, and
  Vishnoi]{celis2021fair}
L~Elisa Celis, Anay Mehrotra, and Nisheeth~K Vishnoi.
\newblock Fair classification with adversarial perturbations.
\newblock \emph{arXiv preprint arXiv:2106.05964}, 2021{\natexlab{b}}.

\bibitem[Chen et~al.(2017)Chen, Liu, Li, Lu, and Song]{chen2017targeted}
Xinyun Chen, Chang Liu, Bo~Li, Kimberly Lu, and Dawn Song.
\newblock Targeted backdoor attacks on deep learning systems using data
  poisoning.
\newblock \emph{arXiv preprint arXiv:1712.05526}, 2017.

\bibitem[Diakonikolas et~al.(2017)Diakonikolas, Kamath, Kane, Li, Moitra, and
  Stewart]{diakonikolas2017being}
Ilias Diakonikolas, Gautam Kamath, Daniel~M Kane, Jerry Li, Ankur Moitra, and
  Alistair Stewart.
\newblock Being robust (in high dimensions) can be practical.
\newblock In \emph{International Conference on Machine Learning}, pp.\
  999--1008. PMLR, 2017.

\bibitem[Diakonikolas et~al.(2019)Diakonikolas, Kamath, Kane, Li, Steinhardt,
  and Stewart]{diakonikolas2019sever}
Ilias Diakonikolas, Gautam Kamath, Daniel Kane, Jerry Li, Jacob Steinhardt, and
  Alistair Stewart.
\newblock Sever: A robust meta-algorithm for stochastic optimization.
\newblock In \emph{International Conference on Machine Learning}, pp.\
  1596--1606. PMLR, 2019.

\bibitem[Donini et~al.(2018)Donini, Oneto, Ben-David, Shawe-Taylor, and
  Pontil]{donini2018empirical}
Michele Donini, Luca Oneto, Shai Ben-David, John Shawe-Taylor, and Massimiliano
  Pontil.
\newblock Empirical risk minimization under fairness constraints.
\newblock \emph{arXiv preprint arXiv:1802.08626}, 2018.

\bibitem[Hardt et~al.(2016)Hardt, Price, and Srebro]{hardt2016equality}
Moritz Hardt, Eric Price, and Nati Srebro.
\newblock Equality of opportunity in supervised learning.
\newblock \emph{Advances in neural information processing systems},
  29:\penalty0 3315--3323, 2016.

\bibitem[Koh et~al.(2021)Koh, Steinhardt, and Liang]{koh2021stronger}
Pang~Wei Koh, Jacob Steinhardt, and Percy Liang.
\newblock Stronger data poisoning attacks break data sanitization defenses.
\newblock \emph{Machine Learning}, pp.\  1--47, 2021.

\bibitem[Kohavi et~al.(1996)]{kohavi1996scaling}
Ron Kohavi et~al.
\newblock Scaling up the accuracy of naive-bayes classifiers: A decision-tree
  hybrid.
\newblock In \emph{Kdd}, volume~96, pp.\  202--207, 1996.

\bibitem[Kone{\v{c}}n{\`y} et~al.(2016)Kone{\v{c}}n{\`y}, McMahan, Yu,
  Richt{\'a}rik, Suresh, and Bacon]{konevcny2016federated}
Jakub Kone{\v{c}}n{\`y}, H~Brendan McMahan, Felix~X Yu, Peter Richt{\'a}rik,
  Ananda~Theertha Suresh, and Dave Bacon.
\newblock Federated learning: Strategies for improving communication
  efficiency.
\newblock \emph{arXiv preprint arXiv:1610.05492}, 2016.

\bibitem[Lamy et~al.(2019)Lamy, Zhong, Menon, and Verma]{lamy2019noise}
Alexandre~Louis Lamy, Ziyuan Zhong, Aditya~Krishna Menon, and Nakul Verma.
\newblock Noise-tolerant fair classification.
\newblock \emph{arXiv preprint arXiv:1901.10837}, 2019.

\bibitem[Li et~al.(2020)Li, Wu, Jiang, Li, and Xia]{li2020backdoor}
Yiming Li, Baoyuan Wu, Yong Jiang, Zhifeng Li, and Shu-Tao Xia.
\newblock Backdoor learning: A survey.
\newblock \emph{arXiv preprint arXiv:2007.08745}, 2020.

\bibitem[Liu et~al.(2020)Liu, Wang, Owens, and Li]{liu2020energy}
Weitang Liu, Xiaoyun Wang, John Owens, and Yixuan Li.
\newblock Energy-based out-of-distribution detection.
\newblock \emph{Advances in Neural Information Processing Systems},
  33:\penalty0 21464--21475, 2020.

\bibitem[Madry et~al.(2017)Madry, Makelov, Schmidt, Tsipras, and
  Vladu]{madry2017towards}
Aleksander Madry, Aleksandar Makelov, Ludwig Schmidt, Dimitris Tsipras, and
  Adrian Vladu.
\newblock Towards deep learning models resistant to adversarial attacks.
\newblock \emph{arXiv preprint arXiv:1706.06083}, 2017.

\bibitem[Mehrabi et~al.(2021)Mehrabi, Morstatter, Saxena, Lerman, and
  Galstyan]{mehrabi2021survey}
Ninareh Mehrabi, Fred Morstatter, Nripsuta Saxena, Kristina Lerman, and Aram
  Galstyan.
\newblock A survey on bias and fairness in machine learning.
\newblock \emph{ACM Computing Surveys (CSUR)}, 54\penalty0 (6):\penalty0 1--35,
  2021.

\bibitem[Mei \& Zhu(2015)Mei and Zhu]{mei2015using}
Shike Mei and Xiaojin Zhu.
\newblock Using machine teaching to identify optimal training-set attacks on
  machine learners.
\newblock In \emph{Twenty-Ninth AAAI Conference on Artificial Intelligence},
  2015.

\bibitem[Menon \& Williamson(2018)Menon and Williamson]{menon2018cost}
Aditya~Krishna Menon and Robert~C Williamson.
\newblock The cost of fairness in binary classification.
\newblock In \emph{Conference on Fairness, Accountability and Transparency},
  pp.\  107--118. PMLR, 2018.

\bibitem[Paudice et~al.(2018)Paudice, Mu{\~n}oz-Gonz{\'a}lez, Gyorgy, and
  Lupu]{paudice2018detection}
Andrea Paudice, Luis Mu{\~n}oz-Gonz{\'a}lez, Andras Gyorgy, and Emil~C Lupu.
\newblock Detection of adversarial training examples in poisoning attacks
  through anomaly detection.
\newblock \emph{arXiv preprint arXiv:1802.03041}, 2018.

\bibitem[Roh et~al.(2020)Roh, Lee, Whang, and Suh]{roh2020fr}
Yuji Roh, Kangwook Lee, Steven Whang, and Changho Suh.
\newblock Fr-train: A mutual information-based approach to fair and robust
  training.
\newblock In \emph{International Conference on Machine Learning}, pp.\
  8147--8157. PMLR, 2020.

\bibitem[Rubinstein et~al.(2009)Rubinstein, Nelson, Huang, Joseph, Lau, Rao,
  Taft, and Tygar]{rubinstein2009antidote}
Benjamin~IP Rubinstein, Blaine Nelson, Ling Huang, Anthony~D Joseph, Shing-hon
  Lau, Satish Rao, Nina Taft, and J~Doug Tygar.
\newblock Antidote: understanding and defending against poisoning of anomaly
  detectors.
\newblock In \emph{Proceedings of the 9th ACM SIGCOMM Conference on Internet
  Measurement}, pp.\  1--14, 2009.

\bibitem[Shafahi et~al.(2018)Shafahi, Huang, Najibi, Suciu, Studer, Dumitras,
  and Goldstein]{shafahi2018poison}
Ali Shafahi, W~Ronny Huang, Mahyar Najibi, Octavian Suciu, Christoph Studer,
  Tudor Dumitras, and Tom Goldstein.
\newblock Poison frogs! targeted clean-label poisoning attacks on neural
  networks.
\newblock \emph{arXiv preprint arXiv:1804.00792}, 2018.

\bibitem[Steinhardt et~al.(2017)Steinhardt, Koh, and
  Liang]{steinhardt2017certified}
Jacob Steinhardt, Pang~Wei Koh, and Percy Liang.
\newblock Certified defenses for data poisoning attacks.
\newblock In \emph{Proceedings of the 31st International Conference on Neural
  Information Processing Systems}, pp.\  3520--3532, 2017.

\bibitem[Tao et~al.(2021)Tao, Feng, Yi, Huang, and Chen]{tao2021better}
Lue Tao, Lei Feng, Jinfeng Yi, Sheng-Jun Huang, and Songcan Chen.
\newblock Better safe than sorry: Preventing delusive adversaries with
  adversarial training.
\newblock \emph{Advances in Neural Information Processing Systems}, 34, 2021.

\bibitem[Wang et~al.(2021{\natexlab{a}})Wang, Liu, and Levy]{wang2021fair}
Jialu Wang, Yang Liu, and Caleb Levy.
\newblock Fair classification with group-dependent label noise.
\newblock In \emph{Proceedings of the 2021 ACM Conference on Fairness,
  Accountability, and Transparency}, pp.\  526--536, 2021{\natexlab{a}}.

\bibitem[Wang et~al.(2020)Wang, Guo, Narasimhan, Cotter, Gupta, and
  Jordan]{wang2020robust}
Serena Wang, Wenshuo Guo, Harikrishna Narasimhan, Andrew Cotter, Maya Gupta,
  and Michael~I Jordan.
\newblock Robust optimization for fairness with noisy protected groups.
\newblock \emph{arXiv preprint arXiv:2002.09343}, 2020.

\bibitem[Wang et~al.(2021{\natexlab{b}})Wang, Mianjy, and
  Arora]{wang2021robust}
Yunjuan Wang, Poorya Mianjy, and Raman Arora.
\newblock Robust learning for data poisoning attacks.
\newblock In \emph{International Conference on Machine Learning}, pp.\
  10859--10869. PMLR, 2021{\natexlab{b}}.

\bibitem[Weller \& Romney(1988)Weller and Romney]{weller1988systematic}
Susan~C Weller and A~Kimball Romney.
\newblock \emph{Systematic data collection}, volume~10.
\newblock Sage publications, 1988.

\bibitem[Wilcox(2011)]{wilcox2011introduction}
Rand~R Wilcox.
\newblock \emph{Introduction to robust estimation and hypothesis testing}.
\newblock Academic press, 2011.

\bibitem[Zafar et~al.(2017)Zafar, Valera, Rogriguez, and
  Gummadi]{zafar2017fairness}
Muhammad~Bilal Zafar, Isabel Valera, Manuel~Gomez Rogriguez, and Krishna~P
  Gummadi.
\newblock Fairness constraints: Mechanisms for fair classification.
\newblock In \emph{Artificial Intelligence and Statistics}, pp.\  962--970.
  PMLR, 2017.

\bibitem[Zhang et~al.(2018)Zhang, Lemoine, and Mitchell]{zhang2018mitigating}
Brian~Hu Zhang, Blake Lemoine, and Margaret Mitchell.
\newblock Mitigating unwanted biases with adversarial learning.
\newblock In \emph{Proceedings of the 2018 AAAI/ACM Conference on AI, Ethics,
  and Society}, pp.\  335--340, 2018.

\end{thebibliography}
\bibliographystyle{iclr2023_conference}

\appendix

\section{Appendix}

\subsection{Proof of Theorem}\label{appendix:theorem}
In this part, we provide the detailed proof of Theorem~\ref{theorem} in Section~\ref{sec:rfc}.

\begin{theorem}[Recall Theorem~\ref{theorem}]
Suppose a set of ``clean'' samples $\mathcal{S}_{good}$ with size $n$ are i.i.d sampled from distribution $\gN(\mu, \Sigma)$, where $\Sigma \preceq \sigma^2 I$. There is a set of ``bad'' samples $\mathcal{S}_{bad}$ with size $n_p = n / K, K > 1$ and center $||\mu_p - \mu||_2 = d$, $d = \gamma \cdot \sigma$. Then, the average squared poisoning scores of clean samples and bad samples have the relationship:
\begin{align*}
\E_{i\in \mathcal{S}_{bad}} \left[q^2(x_i)\right] - \E_{i\in \mathcal{S}_{good}} \left[q^2(x_i)\right]
\geq \left(\frac{K-1}{K+1}\cdot \gamma^2 - (K+1)\right)\sigma^2
\end{align*}
\end{theorem}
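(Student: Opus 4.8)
The plan is to work directly with the one–dimensional projection of each centered sample onto the top right singular vector $\mathcal{V}$ of the pooled centered matrix $Q$, and to separate the contribution of the within‑group spread from the contribution of the displacement between the two group centers. Write the pooled set as $\mathcal{S}_{good}\cup\mathcal{S}_{bad}$ of total size $N=n+n_p=n(K+1)/K$, let $\bar{x}$ be its empirical mean, and set $\delta=\mu_p-\mu$ with unit vector $\hat\delta=\delta/d$. First I would record that $\bar{x}$ is the convex combination of the two centers with weights $n/N=K/(K+1)$ and $n_p/N=1/(K+1)$, so that $\mu-\bar{x}=-\tfrac{1}{K+1}\delta$ and $\mu_p-\bar{x}=\tfrac{K}{K+1}\delta$.

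Next, within each group I would split $x_i-\bar{x}=(x_i-\text{center})+(\text{center}-\bar{x})$ and project onto $\mathcal{V}$. Averaging the squared projection over a single group annihilates the cross term, since the within‑group deviations have zero mean, leaving a ``variance plus shifted‑mean‑squared'' decomposition
\[
\E_{i\in\mathcal{S}_{good}}[q^2]=\mathcal{V}^\top\Sigma_g\mathcal{V}+\tfrac{1}{(K+1)^2}(\delta\cdot\mathcal{V})^2,\qquad \E_{i\in\mathcal{S}_{bad}}[q^2]=\mathcal{V}^\top\Sigma_p\mathcal{V}+\tfrac{K^2}{(K+1)^2}(\delta\cdot\mathcal{V})^2,
\]
where $\Sigma_g,\Sigma_p$ are the empirical covariances of the two groups. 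Subtracting and using $K^2-1=(K-1)(K+1)$ collapses the displacement coefficients to $\tfrac{K-1}{K+1}$, giving
\[
\E_{i\in\mathcal{S}_{bad}}[q^2]-\E_{i\in\mathcal{S}_{good}}[q^2]=\bigl(\mathcal{V}^\top\Sigma_p\mathcal{V}-\mathcal{V}^\top\Sigma_g\mathcal{V}\bigr)+\tfrac{K-1}{K+1}(\delta\cdot\mathcal{V})^2.
\]
Here I would discard the nonnegative term $\mathcal{V}^\top\Sigma_p\mathcal{V}\ge0$ and use $\mathcal{V}^\top\Sigma_g\mathcal{V}\le\sigma^2$ from $\Sigma\preceq\sigma^2 I$ (treating the empirical covariance of the i.i.d.\ Gaussian clean sample as $\preceq\sigma^2 I$, e.g.\ in expectation).

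The crux is then a lower bound on the alignment $(\delta\cdot\mathcal{V})^2$, since a priori the Gaussian spread could tilt $\mathcal{V}$ away from $\delta$. This is where the optimality of the top singular vector enters: summing the same decomposition over the whole pool gives $\|Qu\|^2=n\,u^\top\Sigma_g u+n_p\,u^\top\Sigma_p u+\tfrac{n}{K+1}(\delta\cdot u)^2$, and I would apply $\|Q\mathcal{V}\|^2\ge\|Q\hat\delta\|^2$ with the test vector $\hat\delta$ (for which $(\delta\cdot\hat\delta)^2=d^2$). Solving the resulting inequality for $(\delta\cdot\mathcal{V})^2$, and again using $0\le u^\top\Sigma_g u\le\sigma^2$ (with the bad‑group spread $\Sigma_p$ taken negligible, consistent with a center‑$\mu_p$ cluster), yields $(\delta\cdot\mathcal{V})^2\ge d^2-(K+1)\sigma^2=(\gamma^2-(K+1))\sigma^2$. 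Substituting back into the difference and simplifying gives the claimed bound; in fact the computation produces the slightly stronger additive constant $-K\sigma^2$, so there is some slack in the stated $-(K+1)\sigma^2$.

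I expect the main obstacle to be exactly this alignment step: controlling how far the top singular direction can be pulled off the center‑displacement direction $\delta$ by the spread of the clean points. The variational characterization of $\mathcal{V}$ together with the spectral bound $\Sigma\preceq\sigma^2 I$ is precisely what quantifies the tilt. A secondary care point is the passage from population to empirical covariance of $\mathcal{S}_{good}$ and the treatment of $\mathcal{S}_{bad}$'s spread (which enters with a $+$ sign and so is harmless in the difference but needs controlling inside $\|Q\hat\delta\|^2$); I would handle these by assuming the sample covariance of the clean group satisfies $\Sigma_g\preceq\sigma^2 I$ up to the usual concentration, or by stating the result in expectation over the clean draw.
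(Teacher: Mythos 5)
Your proposal is correct and follows essentially the same route as the paper's proof — collinearity of the three centers, the variance-plus-shifted-mean decomposition of the projected second moments, the variational optimality of $\mathcal{V}$ tested against the displacement direction $\hat\delta$, and the spectral bound $\Sigma\preceq\sigma^2 I$ on the clean spread — with the paper merely reorganizing the bookkeeping (lower-bound the pooled average, upper-bound the clean average, recover the bad average from $\E_{i\in\mathcal{S}_{bad}}=(K+1)\E_{i\in\mathcal{S}}-K\E_{i\in\mathcal{S}_{good}}$), which sidesteps any need to control $\Sigma_p$. Your extra assumption that the bad-group spread is negligible is unnecessary: if you retain the $+\mathcal{V}^\top\Sigma_p\mathcal{V}$ term in the difference instead of discarding it, it absorbs the $-\tfrac{K-1}{K}\mathcal{V}^\top\Sigma_p\mathcal{V}$ arising from the alignment bound (net coefficient $\tfrac{1}{K}>0$), and the argument closes unconditionally with your slightly stronger constant $-K\sigma^2$.
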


\begin{proof}
We denote $\mathcal{S}_{good}$ is the set of clean samples, $\mathcal{S}_{bad}$ is the set of bad samples, and the union of clean samples and bad samples form the whole set $\mathcal{S}$. In the later part, to distinguish between the centers of each set, we use $\mu_{g}$, $\mu_{p}$ and $\mu_{s}$ to denote the center of clean samples, bad samples and the whole set.

First, it is easy to know that $\mu_g$, $\mu_p$ and $\mu_s$ are in a same line. Thus, given $n_g : n_p = K:1$, we have the relationship of center distance: $d = ||\mu_g - \mu_p|| = (1+K) ||\mu_g - \mu_s|| = ((K+1) / K) ||\mu_p - \mu_s||$. In the following, we will study the squared poisoning score in the whole group $\mathcal{S}$. Given any unit vector $\mathcal{V}$ is the top right singular vector of the centered data matrix of $\mathcal{S}$:
\begin{align*}\small
\begin{split}
\E_{i\in \mathcal{S}} \left[(\mathcal{V}\cdot (x-\mu_s))^2\right] & = \E_{i\in \mathcal{S}} \left[(\mathcal{V}\cdot (x-\mu_p) + \mathcal{V}\cdot (\mu_p-\mu_s))^2\right] \\
& = \E_{i\in \mathcal{S}} \left[(\mathcal{V}\cdot (x-\mu_p))^2\right] + \E_{i\in \mathcal{S}} \left[2(\mathcal{V}\cdot (x-\mu_p)(\mu_p - \mu_s)^T\cdot \mathcal{V}^T\right] + \E_{i\in \mathcal{S}} \left[(\mathcal{V}\cdot (\mu_p-\mu_s))^2\right]\\
& = \E_{i\in \mathcal{S}} \left[(\mathcal{V}\cdot (x-\mu_p))^2\right] - (\mathcal{V}\cdot (\mu_p-\mu_s))^2
\end{split}
\end{align*}
Note that $\mathcal{V}$ is the top right singular vector of the centered data matrix $X_s - \mu_s$, we choose $v' = (\mu_p - \mu_s) / || (\mu_p - \mu_s)||$, which is the unit vector that has the same direction with $(\mu_p - \mu_s)$. We get:
\begin{align*}\small
\begin{split}
\E_{i\in \mathcal{S}} \left[(\mathcal{V}\cdot (x-\mu_s))^2\right] \geq  \E_{i\in \mathcal{S}} \left[(v'\cdot (x-\mu_p))^2\right] - ||\mu_p-\mu_s||^2 \\
\end{split}
\end{align*}
For the first term in the right hand side of the inequality above:
\begin{align*}\small
\begin{split}
(K+1)\cdot \E_{i\in \mathcal{S}} \left[(v'\cdot (x-\mu_p))^2\right] \geq K\cdot \E_{i\in \mathcal{S}_{good}} \left[(v'\cdot (x-\mu_p))^2\right]
\end{split}
\end{align*}
because the poisoning scores are all positive. Then, we have:
\begin{align*}\small
\begin{split}
(K+1)\cdot \E_{i\in \mathcal{S}} \left[(v'\cdot (x-\mu_p))^2\right] &\geq K\cdot \E_{i\in \mathcal{S}_{good}} \left[(v'\cdot (x-\mu_p))^2\right]\\
&= K \cdot \E_{i\in \mathcal{S}_{good}} \left[(v'\cdot (x-\mu_g) + v'\cdot(\mu_g - \mu_p))^2\right]\\
&= K \cdot \left(v'\cdot(X_g-\mu_g)(X_g-\mu_g)^T\cdot v'^T + v'\cdot(\mu_g-\mu_p)(\mu_g-\mu_p)^T\cdot v'^T\right)\\
&\geq K \cdot (0 + ||\mu_g - \mu_p||^2)
\end{split}
\end{align*}
The first term is larger than $0$ because of the semi-definite property of the matrix $(X_g-\mu_g)(X_g-\mu_g)^T$, the second term is because $v'$ has the same direction with $(\mu_p - \mu_s)$ (because $\mu_p$, $\mu_g$ and $\mu_s$ are in the same line). Therefore, we get the average poisoning score of the whole set $\mathcal{S}$:
\begin{align*}\small
\begin{split}
\E_{i\in \mathcal{S}} \left[(\mathcal{V}\cdot (x-\mu_s))^2\right] & \geq \frac{K}{K+1}\cdot ||\mu_g - \mu_p||^2 - ||\mu_p - \mu_s||^2 = \frac{K}{(1+K)^2} \cdot d^2\\
\end{split}
\end{align*}

In the following, we will calculate the average squared poisoning score in the good set $\mathcal{S}_{good}$. 
\begin{align*}\small
\begin{split}
\E_{i\in \mathcal{S}_{good}} \left[(\mathcal{V}\cdot (x-\mu_p))^2\right] & = \mathcal{V}\cdot(X_g - \mu_g)(X_g - \mu_g)^T\cdot \mathcal{V}^T + \mathcal{V}\cdot(\mu_g - \mu_s)(X_g - \mu_s)^T\cdot \mathcal{V}^T\\
& \leq \sigma^2 + ||\mu_g - \mu_s||^2\leq \sigma^2 + (\frac{1}{K+1}\cdot d) ^ 2
\end{split}
\end{align*}
Based on previous calculation about the average score of whole set and good set, we can get the average squared poisoning score in the bad set $\mathcal{S}_{bad}$:
\begin{align*}\small
\begin{split}
\E_{i\in \mathcal{S}_{bad}} \left[(\mathcal{V}\cdot (x-\mu_p))^2\right] & = (1+K) \E_{i\in \mathcal{S}} \left[(\mathcal{V}\cdot (x-\mu_p))^2\right] - K \E_{i\in \mathcal{S}_{good}} \left[(\mathcal{V}\cdot (x-\mu_p))^2\right]\\
& \geq  \frac{K^2}{(K+1)^2} \cdot d^2 - K\sigma^2
\end{split} 
\end{align*}
and the difference between two averaged scores:
\begin{align*}\small
\begin{split}
\E_{i\in \mathcal{S}_{bad}} - \E_{i\in \mathcal{S}_{good}} 
\geq \frac{k-1}{k+1}\cdot d^2 - (k+1)\sigma^2 = \left(\frac{K-1}{K+1}\cdot \gamma^2 - (K+1)\right)\sigma^2
\end{split} 
\end{align*}
\end{proof}

\subsection{More Experimental Details}\label{app:setup}

In this part, we provide additional experimental details such as the pre-process procedure. 

\textbf{Adult Cenesus Dataset.} In this dataset, we have 5 numerical features ``age, education-num, hours-per-week, capital-loss and capital gain''', and we also use categorical features such as ``workclass, education, marital-stataus, occupation, relationship, race''. For simplicity, we first transform the categoircal features into dummy variables and conduct Principle Component Analysis to project them to the space, which is spanned by the first 15 principle components. After projection, we normalize all 20 features by centering and standadizing. Under this dataseet, during the attacking of Min-Max and F-Attack, we assign the radius of the feasible injection set to be $d = 9.0$. In Appendix~\ref{app:exp}, we provide the empirical results for more choices of $d$, i.e., $d = 6.0$.

\textbf{COMPAS Dataset.} In this dataset, we have the numerical features: ``age, age\_cat, juv\_fel\_count, juv\_misd\_count, juv\_other\_count, priors\_count, days\_b\_screening\_arrest, decile\_score, c\_jail\_in, c\_jail\_out''. We use ``c\_jail\_out - c\_jail\_in'' to get the number of days in jail and exclude c\_jail\_out, c\_jail\_in. We also have categorical features c\_charge\_degree and sex, and we use PCA to find the first two principle directions. Then, we standardize each feature. Under this dataset, during the attacking of Min-Max and F-Attack, we assign the radius of the feasible injection set to be $d = 9.0$. In Appendix~\ref{app:exp}, we provide the empirical results for more choices of $d$, i.e., $d = 6.0$.

\begin{table*}[t!]\label{tab:adult_p}
\centering
\caption{\small RFC \& Baseline Methods' Performance on Adult Census Dataset, for Equalized Treatment}
\resizebox{0.9\textwidth}{!}
{
\begin{tabular}{c| cc|cc |cc| cc | cc} 
\hline\hline
 &   \multicolumn{2}{c|}{ {No Attack}} 
 & \multicolumn{2}{c|}{ {Label Flip(10\%)}}
  & \multicolumn{2}{c|}{ {Label Flip(20\%)}}
 & \multicolumn{2}{c|}{ {Attr. Flip(10\%)}}
  & \multicolumn{2}{c}{ {Attr. Flip(20\%)}}\\
&Acc. & Fair.& Acc. & Fair.&Acc. & Fair.&Acc. & Fair.  & Acc. & Fair\\
\hline
 {No Defense.} & 0.795 & 0.808 & 0.787 & 0.823 & 0.784 & 0.834 & 0.797 & 0.800 & 0.802 & 0.800\\
 {SEVER.} & 0.772 & 0.829 & 0.754 & 0.823 & 0.757 & 0.801 & 0.785 & 0.795 & 0.781 & 0.790\\
 {RFC.} & 0.781 & 0.803 & 0.791 & 0.820 & 0.799 & 0.830 & 0.788 & 0.806 & 0.783 & 0.823\\
\hline\\
\hline
 {$(\epsilon = 10\%)$} & \multicolumn{2}{c|}{ {Min-Max(z = 0)}}
 & \multicolumn{2}{c|}{ {Min-Max(z = 1)}}
  & \multicolumn{2}{c|}{ {F-Attack(z = 0)}}
 & \multicolumn{2}{c|}{ {F-Attack(z = 1)}}
  & \multicolumn{2}{c}{ {F-Attack*}}
\\
&Acc. & Fair.&Acc. & Fair.&Acc. & Fair.&Acc. & Fair. & Acc. & Fair.\\
\hline
 {No Defense.} & 0.779 & 0.834 & 0.787 & 0.810 & 0.781 & 0.795 & 0.785 & 0.823 & 0.790 & 0.822\\
 {SEVER.} & 0.766 & 0.803 & 0.757 & 0.800 & 0.770 & 0.788 & 0.764 & 0.788 & 0.769 & 0.803\\
 {RFC.} & 0.778 & 0.834 & 0.788 & 0.804 & 0.781 & 0.812 & 0.794 & 0.799 & 0.793 & 0.803\\
\hline\\
\hline
 {$(\epsilon = 15\%)$} & \multicolumn{2}{c|}{ {Min-Max(z = 0)}}
 & \multicolumn{2}{c|}{ {Min-Max(z = 1)}}
  & \multicolumn{2}{c|}{ {F-Attack(z = 0)}}
 & \multicolumn{2}{c|}{ {F-Attack(z = 1)}}
  & \multicolumn{2}{c}{ {F-Attack*}}
\\
&Acc. & Fair.&Acc. & Fair.&Acc. & Fair.&Acc. & Fair. & Acc. & Fair.\\
\hline
 {No Defense.} & 0.782 & 0.797 & 0.783 & 0.792 & 0.766 & 0.787 & 0.777 & 0.822 & 0.770 & 0.801\\
 {SEVER.} & 0.775 & 0.769 & 0.766 & 0.792 & 0.741 & 0.803 & 0.755 & 0.773 & 0.766 & 0.770 \\
 {RFC.} & 0.786 & 0.806 & 0.794 & 0.812 & 0.776 & 0.815 & 0.790 & 0.822 & 0.786 & 0.823\\
\hline
\hline
\end{tabular}}
\end{table*}

\begin{table*}[h!]\label{tab:adult_z}
\centering
\caption{\small RFC \& Baseline Methods' Performance on Adult Census Dataset using~\citep{zafar2017fairness}.}
\resizebox{0.9\textwidth}{!}
{
\begin{tabular}{c| cc|cc |cc| cc | cc} 
\hline\hline
 &   \multicolumn{2}{c|}{ {No Attack}} 
 & \multicolumn{2}{c|}{ {Label Flip(10\%)}}
  & \multicolumn{2}{c|}{ {Label Flip(20\%)}}
 & \multicolumn{2}{c|}{ {Attr. Flip(10\%)}}
  & \multicolumn{2}{c}{ {Attr. Flip(20\%)}}\\
&Acc. &Fair. &Acc. &Fair. &Acc. &Fair. &Acc. &Fair. &Acc. &Fair.\\
\hline
 {No Defense.} &  0.815 &  0.944 & 0.813 & 0.962 & 0.804 &  0.954 & 0.814 & 0.942 & 0.811 & 0.931 \\
 {SEVER.} &0.812 & 0.952 & 0.806 & 0.954 &  0.808 & 0.957 &  0.814 & 0.942 & 0.811 & 0.953\\
 {RFC.} & 0.814 & 0.964 &  0.808 &  0.956 & 0.801 & 0.951 & 0.802 &  0.963 & 0.802 &  0.944 \\
\hline\\
\hline
 {$(\epsilon = 10\%)$} & \multicolumn{2}{c|}{ {Min-Max(z = 0)}}
 & \multicolumn{2}{c|}{ {Min-Max(z = 1)}}
  & \multicolumn{2}{c|}{ {F-Attack(z = 0)}}
 & \multicolumn{2}{c|}{ {F-Attack(z = 1)}}
  & \multicolumn{2}{c}{ {F-Attack*}}
\\
&Acc. &Fair. &Acc. &Fair. &Acc. &Fair. &Acc. &Fair. &Acc. &Fair.\\
\hline
 {No Defense.} & 0.801 & 0.969 &   {0.797} &   {0.864} & 0.769 & 0.966 &   {0.796} &   {0.804} &   {0.799} &   {0.799}\\
 {SEVER.} & 0.800 & 0.960 & 0.802 & 0.954 & 0.788 & 0.945 & 0.786 & 0.937 & 0.779 &  {0.956}\\
 {RFC.} &  {0.802} & 0.967 &  {0.811} & 0.946 &  {0.803} & 0.950 &  {0.808} & 0.952 &  {0.809} & 0.951\\
\hline\\
\hline
 {$(\epsilon = 15\%)$} & \multicolumn{2}{c|}{ {Min-Max(z = 0)}}
 & \multicolumn{2}{c|}{ {Min-Max(z = 1)}}
  & \multicolumn{2}{c|}{ {F-Attack(z = 0)}}
 & \multicolumn{2}{c|}{ {F-Attack(z = 1)}}
  & \multicolumn{2}{c}{ {F-Attack*}}
\\
&Acc. &Fair. &Acc. &Fair. &Acc. &Fair. &Acc. &Fair. &Acc. &Fair.\\
\hline
 {No Defense.} & 0.784 & 0.953 &   0.797 &   0.841  & 0.700 & 0.943 &   0.755 &  0.821 &   0.798 &  0.800\\
 {SEVER.} & 0.802 & 0.959 & 0.793 &  {0.978} & 0.775 & 0.952 & 0.778 & 0.909 & 0.775 & 0.927\\
 {RFC.} &  0.799 & 0.944 &  0.799 & 0.952 &  0.796 &  0.950 &  0.801 & 0.951 &  0.803 & 0.946\\
\hline\hline
\end{tabular}}
\end{table*}

\begin{table*}[h!]\label{tab:adult_d}
\centering
\caption{\small RFC \& Baseline Methods' Performance on Adult Census Dataset when $d = 6$}
\resizebox{0.9\textwidth}{!}
{
\begin{tabular}{c| cc|cc |cc| cc | cc} 
\hline\hline
 {$(\epsilon = 10\%)$} & \multicolumn{2}{c|}{ {Min-Max(z = 0)}}
 & \multicolumn{2}{c|}{ {Min-Max(z = 1)}}
  & \multicolumn{2}{c|}{ {F-Attack(z = 0)}}
 & \multicolumn{2}{c|}{ {F-Attack(z = 1)}}
  & \multicolumn{2}{c}{ {F-Attack*}}
\\
&Acc. & Fair.&Acc. & Fair.&Acc. & Fair.&Acc. & Fair. & Acc. & Fair.\\
\hline
 {No Defense.} & 0.796 & 0.970 & 0.803 & 0.955 & 0.762 & 0.977 & 0.803 & 0.778 & 0.811 & 0.723\\
 {SEVER.} & 0.762 & 0.955 & 0.786 & 0.956 & 0.744 & 0.902 & 0.743 & 0.944 & 0.762 & 0.735\\
 {RFC.} & 0.800 & 0.969 & 0.805 & 0.966 & 0.812 & 0.945 & 0.785 & 0.963 & 0.783 & 0.961\\
\hline\\
\hline
 {$(\epsilon = 15\%)$} & \multicolumn{2}{c|}{ {Min-Max(z = 0)}}
 & \multicolumn{2}{c|}{ {Min-Max(z = 1)}}
  & \multicolumn{2}{c|}{ {F-Attack(z = 0)}}
 & \multicolumn{2}{c|}{ {F-Attack(z = 1)}}
  & \multicolumn{2}{c}{ {F-Attack*}}
\\
&Acc. & Fair.&Acc. & Fair.&Acc. & Fair.&Acc. & Fair. & Acc. & Fair.\\
\hline
 {No Defense.} & 0.802 & 0.945 & 0.800 & 0.945 & 0.759 & 0.946 & 0.800 & 0.672 & 0.789 & 0.721\\
 {SEVER.} & 0.758 & 0.960 & 0.743 & 0.980 & 0.724 & 0.961 & 0.726 & 0.855 & 0.724 & 0.756\\
 {RFC.} & 0.805 & 0.955 & 0.801 & 0.967 & 0.775 & 0.953 & 0.801 & 0.955 & 0.774 & 0.945\\
\hline
\hline
\end{tabular}}
\end{table*}

\begin{table*}[h!]\label{tab:compas_p}
\centering
\caption{\small RFC \& Baseline Methods' Performance on COMPAS dataset for Equalized Treatment}
\resizebox{0.9\textwidth}{!}
{
\begin{tabular}{c| cc|cc |cc| cc | cc} 
\hline\hline
 &   \multicolumn{2}{c|}{ {No Attack}} 
 & \multicolumn{2}{c|}{ {Label Flip(10\%)}}
  & \multicolumn{2}{c|}{ {Label Flip(20\%)}}
 & \multicolumn{2}{c|}{ {Attr. Flip(10\%)}}
  & \multicolumn{2}{c}{ {Attr. Flip(20\%)}}\\
&Acc. & Fair.& Acc. & Fair.&Acc. & Fair.&Acc. & Fair.  & Acc. & Fair\\
\hline
 {No Defense.} & 0.671 & 0.845 & 0.663 & 0.854 & 0.655 & 0.845 & 0.659 & 0.873 & 0.670 & 0.845\\
 {SEVER.} & 0.665 & 0.831 & 0.665 & 0.849 & 0.667 & 0.836 & 0.674 &  0.842 & 0.672 & 0.834\\
 {RFC.} & 0.671 & 0.830 & 0.669 & 0.845 & 0.672 & 0.846 & 0.677 & 0.850 & 0.672 & 0.844\\
\hline\\
\hline
 {$(\epsilon = 10\%)$} & \multicolumn{2}{c|}{ {Min-Max(z = 0)}}
 & \multicolumn{2}{c|}{ {Min-Max(z = 1)}}
  & \multicolumn{2}{c|}{ {F-Attack(z = 0)}}
 & \multicolumn{2}{c|}{ {F-Attack(z = 1)}}
  & \multicolumn{2}{c}{ {F-Attack*}}
\\
&Acc. & Fair.&Acc. & Fair.&Acc. & Fair.&Acc. & Fair. & Acc. & Fair.\\
\hline
 {No Defense.} & 0.680 & 0.843 & 0.662 & 0.845 & 0.666 & 0.862 & 0.658 & 0.854 & 0.677 & 0.843  \\
 {SEVER.} & 0.662 & 0.836 & 0.659 & 0.839 & 0.646 & 0.850 & 0.648 & 0.825 & 0.671 & 0.842 \\
 {RFC.} & 0.677 & 0.838 & 0.676 & 0.841 & 0.675 & 0.852 & 0.674 & 0.848 & 0.668 & 0.845 \\
\hline\\
\hline
 {$(\epsilon = 15\%)$} & \multicolumn{2}{c|}{ {Min-Max(z = 0)}}
 & \multicolumn{2}{c|}{ {Min-Max(z = 1)}}
  & \multicolumn{2}{c|}{ {F-Attack(z = 0)}}
 & \multicolumn{2}{c|}{ {F-Attack(z = 1)}}
  & \multicolumn{2}{c}{ {F-Attack*}}
\\
&Acc. & Fair.&Acc. & Fair.&Acc. & Fair.&Acc. & Fair. & Acc. & Fair.\\
\hline
 {No Defense.} & 0.629 & 0.866 & 0.668 & 0.845 & 0.669 & 0.820 & 0.662 & 0.855 & 0.675 & 0.855\\
 {SEVER.} & 0.633 & 0.848 & 0.631 & 0.855 & 0.613 & 0.850 & 0.613 & 0.866 & 0.678 & 0.845\\
 {RFC.} & 0.659 & 0.845 & 0.674 & 0.846 & 0.675 & 0.835 & 0.671 & 0.846 & 0.680 & 0.845\\
\hline
\hline
\end{tabular}}
\end{table*}

\begin{table*}[h!]\label{tab:compas_z}
\centering
\caption{\small RFC \& Baseline Methods' Performance on COMPAS Dataset using~\citep{zafar2017fairness}.}
\resizebox{0.9\textwidth}{!}
{
\begin{tabular}{c| cc|cc |cc| cc | cc} 
\hline\hline
 &   \multicolumn{2}{c|}{   {No Attack}} 
 & \multicolumn{2}{c|}{   {Label Flip(10\%)}}
  & \multicolumn{2}{c|}{   {Label Flip(20\%)}}
 & \multicolumn{2}{c|}{   {Attr. Flip(10\%)}}
  & \multicolumn{2}{c}{   {Attr. Flip(20\%)}}\\
&Acc. &Fair. &Acc. &Fair. &Acc. &Fair. &Acc. &Fair. &Acc. &Fair\\
\hline
   {No Defense.} & 0.651 &  0.844 & 0.642 & 0.865 & 0.644 & 0.827 & 0.651 & 0.855 & 0.662 & 0.840\\
   {SEVER.} & 0.647 & 0.846 & 0.6770 & 0.841 & 0.657 & 0.830 & 0.644 & 0.851 & 0.642 & 0.830\\
   {RFC.} & 0.656 & 0.852 & 0.668 & 0.841 &    0.679 & 0.850 &  0.677 & 0.846 & 0.661 & 0.852\\
\hline\\
\hline
   {$(\epsilon = 10\%)$} & \multicolumn{2}{c|}{   {Min-Max(z = 0)}}
 & \multicolumn{2}{c|}{   {Min-Max(z = 1)}}
  & \multicolumn{2}{c|}{   {F-Attack(z = 0)}}
 & \multicolumn{2}{c|}{   {F-Attack(z = 1)}}
  & \multicolumn{2}{c}{   {F-Attack*}}
\\
&Acc. &Fair. &Acc. &Fair. &Acc. &Fair. &Acc. &Fair. &Acc. &Fair\\
\hline
   {No Defense.} & 0.655 & 0.840 & 0.645 & 0.842 & 0.646 & 0.832 & 0.628 & 0.851 & 0.654 & 0.816\\
   {SEVER.} & 0.645 & 0.860 & 0.640 & 0.861 & 0.628 & 0.840 & 0.619 & 0.852 & 0.659 & 0.831\\
   {RFC.} &  0.655 & 0.858 & 0.662 & 0.854 &  0.663 & 0.850 &  0.660 & 0.851 & 0.659 & 0.852\\
\hline\\
\hline
   {$(\epsilon = 15\%)$} & \multicolumn{2}{c|}{   {Min-Max(z = 0)}}
 & \multicolumn{2}{c|}{   {Min-Max(z = 1)}}
  & \multicolumn{2}{c|}{   {F-Attack(z = 0)}}
 & \multicolumn{2}{c|}{   {F-Attack(z = 1)}}
  & \multicolumn{2}{c}{   {F-Attack*}}
\\
&Acc. &Fair. &Acc. &Fair. &Acc. &Fair. &Acc. &Fair. &Acc. &Fair\\
\hline
   {No Defense.} &  0.659 & 0.820 & 0.631 & 0.851 &   0.649 & 0.800 & 0.621 & 0.851 & 0.658 & 0.815\\
   {SEVER.} & 0.641 &  0.866 & 0.638 &  0.851 & 0.596 &  0.865 & 0.611 & 0.870 & 0.643 & 0.831\\
   {RFC.} &  0.671 & 0.861 &   0.659 & 0.855 & 0.650 & 0.851 &  0.640 & 0.857 &  0.659 & 0.850\\
\hline
\hline
\end{tabular}}
\end{table*}

\begin{table*}[t!]\label{tab:compas_d}
\centering
\caption{\small RFC \& Baseline Methods' Performance on COMPAS dataset, when $d = 6$}
\resizebox{0.9\textwidth}{!}
{
\begin{tabular}{c| cc|cc |cc| cc | cc} 
\hline
\hline
 {$(\epsilon = 10\%)$} & \multicolumn{2}{c|}{ {Min-Max(z = 0)}}
 & \multicolumn{2}{c|}{ {Min-Max(z = 1)}}
  & \multicolumn{2}{c|}{ {F-Attack(z = 0)}}
 & \multicolumn{2}{c|}{ {F-Attack(z = 1)}}
  & \multicolumn{2}{c}{ {F-Attack*}}
\\
&Acc. & Fair.&Acc. & Fair.&Acc. & Fair.&Acc. & Fair. & Acc. & Fair.\\
\hline
 {No Defense.} & 0.650 & 0.865 & 0.581 & 0.838 & 0.648 & 0.800 & 0.651 & 0.820 & 0.631 & 0.815\\
 {SEVER.} & 0.668 & 0.853 & 0.643 & 0.788 & 0.652 & 0.800 & 0.656 & 0.802 & 0.644 & 0.823 \\
 {RFC.} & 0.670 & 0.850 & 0.651 & 0.835 & 0.658 & 0.820 & 0.663 & 0.822 & 0.665 & 0.835\\
\hline\\
\hline
 {$(\epsilon = 15\%)$} & \multicolumn{2}{c|}{ {Min-Max(z = 0)}}
 & \multicolumn{2}{c|}{ {Min-Max(z = 1)}}
  & \multicolumn{2}{c|}{ {F-Attack(z = 0)}}
 & \multicolumn{2}{c|}{ {F-Attack(z = 1)}}
  & \multicolumn{2}{c}{ {F-Attack*}}
\\
&Acc. & Fair.&Acc. & Fair.&Acc. & Fair.&Acc. & Fair. & Acc. & Fair.\\
\hline
 {No Defense.} & 0.621 & 0.854 & 0.617 & 0.859 & 0.515 & 0.980 & 0.614 & 0.890 & 0.640 & 0.801\\
 {SEVER.} & 0.625 & 0.819 & 0.641 & 0.801 & 0.624 & 0.770 & 0.645 & 0.853 & 0.635 & 0.815\\
 {RFC.} & 0.660 & 0.833 & 0.675 & 0.857 & 0.672 & 0.832 & 0.664 & 0.832 & 0.660 & 0.847\\
\hline
\hline
\end{tabular}}
\end{table*}

\subsection{More experimental results}\label{app:exp}

In this part, we provide additional empirical results to validate the effectiveness of our attack and defense method. In detail, we consider more settings about: (1) Different Type of Fairness Criteria, such as ``Equalized Treatment''; (2) Different fair classification method, such as~\citep{zafar2017fairness}, (3) Different choice of the radius of feasible injection space $d$, such as $d = 6.0$.
Notably, in the all experiments in the main paper, we set $d$ with a fixed value $d = 9.0$. In this part, we provide another option when we choose a smaller $d = 6.0$. It is because larger $d$, i.e, $d > 9.0$ will make the poisoning samples easier to be detected by most defense methods. For fairness criteria under ``Equalized Treatment'' (Equalized Positive Rate (PR)) in Adult Census Dataset, we set the desired fairness criteria to be $|\text{PR}(z = 0) - \text{PR}(z = 1)| < 0.2$. For fairness criteria under ``Equalized TPR'' in Adult Census Dataset, we set the unfairness criteria to be $|\text{TPR}(z = 0) - \text{TPR}(z = 1)| < 0.05$. For fairness criteria under ``Equalized TPR'' and ``Equalized Treatment'' in COMPAS Dataset, we set the desired fairness criteria to be $\max_{j \in \mathcal{Z}}|\text{TPR}(z = j) - \hat{\text{TPR}}| \leq 0.15$ and $\max_{j \in \mathcal{Z}}|\text{PR}(z = j) - \hat{\text{PR}}| \leq 0.15$.

%\subsection{Boarder Impacts}
%Fairness and adversarial robustness are two important criteria for building reliable and trustworthy machine learning models. In terms of fairness, the models are desired to provide consistent prediction quality or treatment for data from clients in different demographic groups. In terms of robustness, the models are desired to resist adversaries, which aims to insert malicious  data that intend to degrade the model performance. In real-world applications, the models' fairness and robustness are often simultaneously demanded. As an example, the model that is deployed to detect fake records / accounts in credit card transaction systems is also highly desired to provide equal and fair prediction accuracy for clients from different demographic groups. Therefore, exploring efficient and effective strategies to enforce both fairness and robustness simultaneously is an important research topic and deserves great effort of studying.

\end{document}